\documentclass{article} 
\usepackage{iclr2025_conference_arxiv,times}
\usepackage{minitoc}
\usepackage[nottoc]{tocbibind}

\usepackage[toc,page,header]{appendix}

\usepackage{amsmath,amsfonts,bm}

\newcommand{\figleft}{{\em (Left)}}

\newcommand{\figright}{{\em (Right)}}
\newcommand{\figtop}{{\em (Top)}}
\newcommand{\figmiddle}{{\em (Middle)}}
\newcommand{\figbottom}{{\em (Bottom)}}








\def\eqref#1{equation~\ref{#1}}









\def\1{\bm{1}}










\DeclareMathAlphabet{\mathsfit}{\encodingdefault}{\sfdefault}{m}{sl}
\SetMathAlphabet{\mathsfit}{bold}{\encodingdefault}{\sfdefault}{bx}{n}













\DeclareMathOperator*{\argmax}{arg\,max}
\DeclareMathOperator*{\argmin}{arg\,min}



\usepackage{graphicx}
\usepackage{hyperref}
\usepackage{url}
\usepackage{wrapfig}
\hypersetup{
  colorlinks,
  citecolor=green,
  linkcolor=violet,
  urlcolor=blue
}
\usepackage{floatrow} 

\usepackage{adjustbox}
\usepackage{algorithm}
\usepackage{algpseudocode}

\usepackage{stmaryrd}
\usepackage{trimclip}
\makeatletter
\DeclareRobustCommand{\shortto}{%
  \mathrel{\mathpalette\short@to\relax}%
}
\newcommand{\short@to}[2]{%
  \mkern2mu
  \clipbox{{.5\width} 0 0 0}{$\m@th#1\vphantom{+}{\shortrightarrow}$}%
  }
\makeatother

\usepackage{tikz}
\usetikzlibrary{arrows.meta, positioning}

\usepackage{amsmath}
\usepackage{amssymb}
\usepackage{mathtools}
\usepackage{amsthm}
\usepackage{bbm}

\theoremstyle{plain}
\newtheorem{theorem}{Theorem}[section]
\newtheorem{proposition}[theorem]{Proposition}

\theoremstyle{definition}

\theoremstyle{remark}

\usepackage{cancel}

\title{
    \textit{Jump Your Steps}: Optimizing Sampling Schedule of Discrete Diffusion Models
}


\author{Yonghyun Park\thanks{Work done during an internship at SONY AI.}~~\&~Chieh-Hsin Lai \\
Sony AI \\
Tokyo, Japan \\
\texttt{enkeejunior1@snu.ac.kr} \\
\And
Satoshi Hayakawa \\
Sony Group Corporation \\
Tokyo, Japan \\
\And
Yuhta Takida \\
Sony AI \\
Tokyo, Japan \\
\And
Yuki Mitsufuji \\
Sony AI~\&~Sony Group Corporation \\
New York, USA \\
}

%

\usepackage{amsmath,amsfonts,bm}
\usepackage{amsbsy}

\usepackage{graphicx}

\newcommand{\fref}[1]{Figure~\ref{#1}}

\newcommand{\eref}[1]{Eq.~(\ref{#1})}

\newcommand{\sref}[1]{Section~\ref{#1}}
\newcommand{\aref}[1]{Appendix \ref{#1}}

\newcommand{\yh}[1]{{\color{black}{{#1}}}}



\iclrfinalcopy 
\begin{document}
\doparttoc 
\faketableofcontents 

\maketitle

\begin{abstract}

Diffusion models have seen notable success in continuous domains, leading to the development of discrete diffusion models (DDMs) for discrete variables. Despite recent advances, DDMs face the challenge of slow sampling speeds. While parallel sampling methods like $\tau$-leaping accelerate this process, they introduce \textit{Compounding Decoding Error} (CDE), where discrepancies arise between the true distribution and the approximation from parallel token generation, leading to degraded sample quality. In this work, we present \textit{Jump Your Steps} (JYS), a novel approach that optimizes the allocation of discrete sampling timesteps by minimizing CDE without extra computational cost. More precisely, we derive a practical upper bound on CDE and propose an efficient algorithm for searching for the optimal sampling schedule. Extensive experiments across image, music, and text generation show that JYS significantly improves sampling quality, establishing it as a versatile framework for enhancing DDM performance for fast sampling.

\end{abstract}

\section{Introduction}

{
Diffusion models~\citep{sohl2015deep, song2020score, ho2020denoising, song2020denoising, karras2022elucidating} have achieved remarkable success in generation tasks within the continuous domain. However, certain modalities, such as text and music, inherently possess discrete features. Recently, discrete diffusion models (DDMs)~\citep{austin2021structured, campbell2022continuous, campbell2024generative, gat2024discrete} have demonstrated performance comparable to state-of-the-art methods in various areas, including text~\citep{lou2023discrete, shi2024simplified} and image~\citep{chang2022maskgit, gu2022vector} generation. Nevertheless, like their continuous counterparts, DDMs encounter a significant bottleneck in sampling speed due to their progressive refinement process.

In contrast to continuous-domain diffusion models, where sampling dynamics are driven by sample-wise differential equations~\citep{song2020score}, allowing for the direct application of well-established numerical methods to accelerate generation, enhancing speed in DDMs poses a significant challenge. To address this, researchers have proposed fast and efficient samplers, including notable methods such as the $\tau$-leaping~\citep{campbell2022continuous, lezama2022discrete, sun2022score} and $k$-Gillespie algorithms~\citep{zhao2024informed}, which facilitate parallel sampling of multiple tokens in a single step. However, this parallel but independent sampling introduces \textit{Compounding Decoding Error} (CDE)~\citep{lezama2022discrete}, which arises from a mismatch between the training and inference distributions of intermediate latents during parallel sampling. Specifically, while each token is generated according to its marginal distribution, the joint distribution deviates from the learned distribution. To mitigate this issue, the predictor-corrector (PC) sampler~\citep{campbell2022continuous} has been proposed. This sampler slightly perturbs the generated data to correct incorrectly generated tokens. However, these methods have limitations, including impracticality under low computational budgets~\citep{campbell2022continuous}, the need for an additional corrector~\citep{lezama2022discrete}, or reliance on specialized architectures and loss functions~\citep{zhao2024informed}.

\begin{figure}[!h]
    \centering
    \includegraphics[width=1\linewidth]{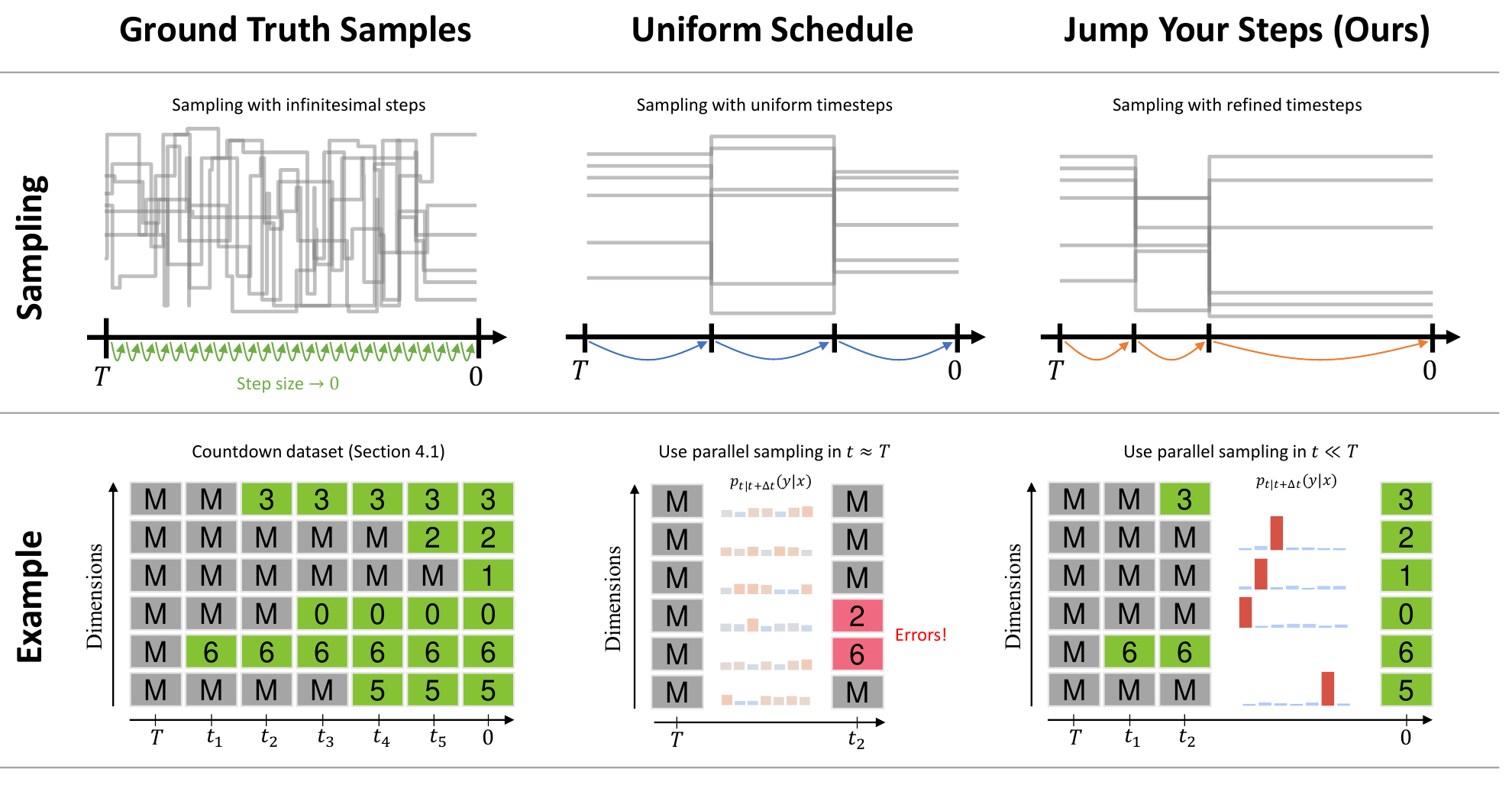}
    \caption{
    \figtop{}~Comparison of sampling trajectories: ground truth vs. parallel sampling using a uniform schedule and the Jump Your Steps (JYS) schedule. \figbottom{}~\yh{Uniform schedule exhibits compounding decoding errors during parallel sampling, while JYS reduces them by using fewer steps in deterministic phases and reallocating skipped steps to other timesteps.}
    }
    \label{fig:teaser}
\end{figure}

To reduce CDE and enable fast sampling in DDM fundamentally, we first introduce a rigorous quantity to measure CDE (see \fref{fig:teaser} Top, and \sref{subsection:section3.1}) and propose a novel approach called \textit{Jump Your Steps} (JYS), which optimizes the allocation of discrete sampling timesteps \(\{ T \shortto t_1  \shortto \dots \shortto t_{N-1} \shortto 0 \}\) under a fixed total sampling budget $N$ to minimize CDE. Our core idea is to derive efficiently computable bounds for CDE (see \sref{subsection:section.add} and \sref{subsection:section3.2}) and strategically select sampling timesteps by solving minimization problems to reduce these bounds (\sref{subsection:section3.3} and \sref{subsection:section3.4}), theoretically ensuring a decrease in the gap between the ground truth distribution and the approximated distribution through parallel sampling (see \fref{fig:theory-relationship}).

Unlike previous methods such as the PC sampler, our approach requires no additional computational resources or modifications to the model architecture or loss function. We empirically validate the effectiveness of our sampling schedule across various datasets, including synthetic sequential data, CIFAR-10 (image), Lakh Pianoroll (music), and text modeling. Our approach accelerates DDM sampling across models using different forward corruption transition kernels, such as uniform, Gaussian, and absorbing transition matrices. Our comprehensive experiments cover both unconditional and conditional generation tasks, consistently showing that optimizing the sampling schedule significantly enhances sampling quality. These results indicate that our method serves as a general framework for speeding up discrete diffusion model sampling.}


\section{Background}

\subsection{Continuous Time Framework for Discrete Diffusion Models.}

{

Discrete diffusion models (DDMs) define the generative process as the reverse of the data-corrupting forward process, expressed as a Continuous Time Markov Chain (CTMC) on a finite state space $\mathcal{S}$~\citep{campbell2022continuous}. For the data-corrupting process $(X_t)_{t\in [0,T]}$, the density evolution is described as: 
\begin{equation}
q_{t+dt|t}(y \mid x) = \delta_{x y} +R_t(x,y) dt + o(dt)
\end{equation}
Here, $\delta_{x y}$ is the Dirac delta function, $R_t \in \mathbb{R}^{S\times S}$ is the transition rate matrix of the forward CTMC, with $S=|\mathcal{S}|$, and $dt > 0$. Rate matrices ensure the marginal distribution $q_t(x_t) = \int q_t(x_t|x_0)q_0(x_0) dx_0$, where $q_0 = p_{\mathrm{data}}$ and $q_T \approx \pi$, the stationary distribution of the forward CTMC. Various transition matrices have been proposed, allowing $\pi$ to follow a uniform or Gaussian distribution, or converting samples into masked tokens.

For generation, we reverse the forward process, moving from the marginal $q_T$ back to $p_{\mathrm{data}}$. This time-reversal CTMC is also a CTMC~\citep{campbell2022continuous, campbell2024generative}: 
\begin{equation} \label{eq:reverse_kolmo}
q_{t-dt|t}(y \mid x) = \delta_{x y} +\tilde{R}(x, y) dt + o(dt), \end{equation} 
where the backward transition rate $\tilde{R}$ is defined as: 
\begin{equation*}
\tilde{R}(x, y) 
= R(y, x)\underbrace{\frac{q_t(y)}{q_t(x)}}_{\mathclap{\textrm{Score Parametrization}}} 
= R(y, x)\sum_{x_0} \frac{q_t(y \mid x_0)}{q_t(x\mid x_0)} \underbrace{q_{0|t}(x_0 \mid x)}_{\mathclap{\textrm{Denoising Parametrization}}} 
\end{equation*}
The literature primarily falls into two parameterizations: Denoising parameterization~\citep{campbell2024generative, austin2021structured, campbell2022continuous} approximates a parameterized denoising model as $p^\theta_{0|t}(x_0|x) \approx q_{0|t}(x_0 | x)$. Conversely, score parameterization~\citep{lou2023discrete, meng2022concrete} estimates the ratio of the data distribution as $s_t^\theta(y|x)={q_t(y)}/{q_t(x)}$.}

\subsection{Sampling from the backward CTMC}
\label{section:background}

\yh{
\paragraph{Gillespie’s Algorithm} was proposed as a simulation algorithm for a given CTMC \citep{gillespie2007stochastic}. Gillespie's algorithm simulates the CTMC by calculating the rate matrix at each state transition. If the rate matrix of the CTMC depends only on the state, Gillespie’s algorithm serves as an exact simulation method. However, since it allows for only one token transition each time the rate matrix is calculated, it is computationally inefficient.

\paragraph{$k$-Gillespie’s Algorithm} Instead of updating only one token for each rate matrix calculation, the $k$-Gillespie's algorithm \citep{zhao2024informed} updates \( k \) tokens in parallel. This reduces the computation by a factor of \( 1/k \) compared to the original Gillespie algorithm.

\paragraph{$\tau$-Leaping} On the other hand, \cite{campbell2022continuous} proposes sampling through $\tau$-leaping. Unlike the $k$-Gillespie algorithm, which update $k$ tokens in parallel, $\tau$-leaping simultaneously updates all tokens according to the given fixed rate matrix within the specified time interval $[t, t+\tau)$. Recently, Tweedie $\tau$-leaping, which considers changes in the rate matrix according to the noise schedule, has been proposed \citep{sun2022score, lou2023discrete}. 
}

\section{Optimizing the sampling schedule of discrete diffusion models}


In this section, we aim to optimize sampling schedule \(\{ T \shortto t_1 \shortto t_2 \shortto \dots \shortto t_{N-1} \shortto 0 \}\) to minimize the \textit{Compounding Decoding Error} (CDE) introduced by parallel sampling. First, we define and analyze the CDE, examining its relationship to both the sampling schedule (\sref{subsection:section3.1}) and sampling quality (\sref{subsection:section.add}). In \sref{subsection:section3.2}, we derive an upper bound on the CDE, which serves as the objective for the sampling schedule optimization. Finally, we introduce a hierarchical breakdown strategy (\sref{subsection:section3.3}) and computational techniques (\sref{subsection:section3.4}) to make the optimization tractable. 
\fref{fig:theory-relationship} summarizes the relationships between the theoretical analyses discussed in this section.

Although this section focuses on samplers based on \(\tau\)-leaping, all methods are also applicable to the \(k\)-Gillespie algorithm. For extensions to \(k\)-Gillespie, please refer to the Algorithm~\ref{alg:klub}.

\paragraph{Notations} 
To begin, we introduce some essential mathematical notation. \( X : \) a random variable, \( \mathbf{x} : \) its observation, \( \mathbb{P}, \mathbb{Q} : \) distributions, \(\{ T \shortto t_1 \shortto \dots \shortto t_{N-1} \shortto 0 \} :\) sampling schedule, and \(\mathbb{Q}^{a \shortto b \shortto \cdots \shortto c} :\) the distribution generated by the sampling schedule \(\{ a \shortto b \shortto \cdots \shortto c \}\). For clarity, when working with backward CTMCs, we slightly abuse notation and express intervals as \([s, t] \triangleq \{u \mid s \ge u \ge t\}\);
the same applies to open and half-open intervals.

\begin{figure}[th]
    \centering
    \usetikzlibrary{calc} 
    \begin{tikzpicture}[
        node distance=3cm and 3.5cm, 
        every node/.style={align=center, font=\footnotesize},
        block/.style={rectangle, draw, minimum height=2em, text centered, font=\footnotesize},
        root/.style={rectangle, draw, thick, font=\footnotesize, text centered, minimum height=2em},
        dashedblock/.style={rectangle, draw, dashed, thick, minimum height=2em, font=\footnotesize}
    ]
    
    \node (DKL_0) {$\mathcal D_{\mathrm{KL}}(\mathbb{P}_0 \| \mathbb{Q}_0^{t_0\shortto t_1 \shortto \cdots \shortto 0})$ }; 
    \node[right=0.1cm of DKL_0] (leq1) {$\leq$};
    \node[right=0.1cm of leq1] (DKL_t) {$\sum_{i=0}^{N-1} \mathcal D_{\mathrm{KL}}(\mathbb{P}_{t_{i+1}} \| \mathbb{Q}^{t_{i}\shortto t_{i+1}}_{t_{i+1}})$};
    \node[right=0.1cm of DKL_t] (leq2) {$\leq$};
    \node[right=0.1cm of leq2] (DKL_path) {$\mathcal D_{\mathrm{KL}}(\mathbb{P}_{\mathrm{paths}} \| \mathbb{Q}_{\mathrm{paths}}^{t_0\shortto t_1 \shortto \cdots \shortto 0})$};

    \node[above=0.025cm of leq1] (thm_3_1) {Theorem \ref{theorem:1}};
    \node[above=0.025cm of leq2] (eq6) {\eref{equation:CDE-KLUB}};

    \node[below=0.75cm of DKL_t] (CDE) {$\sum_{i=0}^{N-1} \mathcal{E}_{\mathrm{CDE}}(t_{i} \shortrightarrow t_{i+1})$};
    \node[below=0.75cm of DKL_path] (KLUB) {KLUB$(\mathbb{P}_{0} \| \mathbb{Q}_{0}^{t_0\shortto t_1 \shortto \cdots \shortto 0})$};
    \node[below=0.75cm of KLUB] (JYS) {$\{ T \shortto t_1 \shortto \dots \shortto t_{N-1} \shortto 0 \}$};

    \draw[dashed] (DKL_t) -- (CDE) node[midway, right] {Eqs.~(\ref{equation:CDE}, \ref{equation:CDE_marginal})};
    \draw[dashed] (DKL_path) -- (KLUB) node[midway, right] {Theorem \ref{theorem:2}};
    \draw[->] (KLUB) -- (JYS) node[midway, right] {Algorithm \ref{alg:klub}, \ref{alg:jump-your-steps}.};
    
    \end{tikzpicture}
    \caption{
    An illustration of the relationship between the KL divergence of the distribution, the compounding error \(\mathcal{E}_{\mathrm{CDE}}\) (\sref{subsection:section3.1}), and KLUB (\sref{subsection:section3.2}). The sampling schedule \(\{ T \shortto t_1 \shortto \dots \shortto t_{N-1} \shortto 0 \}\) is optimized to minimize KLUB using the efficient algorithms detailed in Section~\ref{subsection:section3.3}, and \ref{subsection:section3.4}.
    }
\label{fig:theory-relationship}
\end{figure}

\subsection{
Time-Dependent Nature of Compounding Decoding Errors}
\label{subsection:section3.1}

We introduce a measure for the \emph{Compounding Decoding Error} (CDE), $\mathcal E_{\mathrm{CDE}}$, which quantifies the discrepancy between the true joint distribution and the distribution from parallel token generation. For illustration, we consider a discrete process $X_t = (X^1_t, X^2_t)$ with sequence length 2, consisting of tokens \( X^1_t \) and \( X^2_t \). The general case is provided in \aref{appendix:theorem1}.

We propose measuring the CDE for a single parallel sampling step $\{s \shortto t\}$ start from $\mathbf{x}_s$ by using the KL divergence between the joint distribution \( P_{X^1_t,X^2_t|\mathbf{x}_s} \) and the product of marginal distributions \( P_{X^1_t|\mathbf{x}_s} \otimes P_{X^2_t|\mathbf{x}_s} \):

\begin{equation}
\mathcal E_{\mathrm{CDE}}(s \shortrightarrow t | \mathbf{x}_s) 
\triangleq 
\mathcal D_{\mathrm{KL}}(
\underbrace{P_{X^1_t, X^2_t|\mathbf{x}_s}}_{\textrm{True distribution}} 
\| 
\underbrace{P_{X^1_t|\mathbf{x}_s} \otimes P_{X^2_t|\mathbf{x}_s}  }_{\textrm{Approx.\ distribution from parallel sampling}} 
).
\label{equation:CDE}
\end{equation}
We note that the defined CDE is equivalent to the conditional mutual information \( \mathcal I(X^1_t; X^2_t | \mathbf{x}_s) \) of tokens \( X^1_t, X^2_t\):
\begin{equation}
\mathcal E_{\mathrm{CDE}}(s \shortrightarrow t | \mathbf{x}_s) = \mathcal I(X^1_t; X^2_t  | \mathbf{x}_s).
\label{equation:mi}
\end{equation}

This expression links the compounding error to the mutual information between tokens; lower mutual information reduces parallel sampling errors. For example, as shown in \fref{fig:teaser} \figbottom{}, as generation progresses, the uncertainty of each token decreases over time due to the tokens already generated, reducing mutual information and preventing CDE. In general, CDE depends on the timesteps, and its behavior varies with the data distribution, corruption kernel (see Fig.~\ref{figure:jump-your-steps} for illustration), and DDM sampling methods. \yh{Motivated by this observation, we hypothesize that we can reduce the CDEs during generation process by optimizing the sampling schedule.} 

\subsection{Relation between Compounding Decoding errors and generation quality}
\label{subsection:section.add}

While the \eref{equation:CDE} allows us to estimate the CDE starting from a specific state \(\mathbf{x}_s\), in practice, we are interested in the average compounding error over all possible starting states at time \( s \). To assess the overall impact of the CDE when transitioning over the timesteps \( s \shortrightarrow t \), we consider the expected value of \(\mathcal{E}_{\mathrm{CDE}}(s \shortrightarrow t | \mathbf{x}_s)\) with respect to \(\mathbf{x}_s \sim \mathbb{P}_s\). This leads us to consider:
\begin{equation}
\mathcal E_{\mathrm{CDE}}(s \shortrightarrow t) 
\triangleq \mathbb{E}_{\mathbf{x}_s} \left[ \mathcal E_{\mathrm{CDE}}(s \shortrightarrow t | \mathbf{x}_s) \right].
\label{equation:CDE_marginal}
\end{equation}

Consider a sampling schedule \( \{ T = t_0 \shortto t_1 \shortto \dots \shortto t_{N-1} \shortto 0 = t_{N} \} \), which will be specified later. Our goal is to minimize the cumulative CDE that arises from each parallel sampling step within the given schedule. If we ignore the accumulated error from the previous steps that affects the consecutive steps, our objective is as follows:
\begin{equation}
    \min_{t_1, t_2, \dots, t_{N-1}} \sum_{i=0}^{N-1} \mathcal{E}_{\mathrm{CDE}}(t_i \shortrightarrow t_{i+1}).
\end{equation}

Interestingly, we find in the following theorem that cumulative CDEs over the sampling schedule can upper bound the KL divergence between the true distribution at time \( t = 0 \), denoted \( \mathbb{P}_0 \), and the distribution \( \mathbb{Q}_0^{T\shortto t_1 \shortto \cdots \shortto 0} \) obtained from parallel sampling along the sampling schedule (see \aref{appendix:theorem1} for proof):


\begin{theorem} 
\label{theorem:1}
We have the following bound on the KL divergence between \( \mathbb{P}_0 \) and \( \mathbb{Q}_0^{T \shortto t_1 \shortto \cdots \shortto 0} \) in terms of cumulative CDEs:
\begin{equation}\label{eq:KL_CDE}
    \mathcal{D}_{\mathrm{KL}}(\mathbb{P}_0 \| \mathbb{Q}_0^{T\shortto t_1 \shortto \cdots \shortto 0}) \le \sum_{i=0}^{N-1} \mathcal{E}_{\mathrm{CDE}}(t_i \shortrightarrow t_{i+1}).
\end{equation}
\end{theorem}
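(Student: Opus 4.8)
The plan is to lift the inequality from the time-$0$ marginals to the joint law of the whole trajectory on the grid $\{t_0 = T, t_1, \dots, t_N = 0\}$ and then apply the chain rule for relative entropy along the Markov structure of the two processes. Let $\mathbb{P}_{\mathrm{paths}}$ be the law of $(X_{t_0}, \dots, X_{t_N})$ under the true backward CTMC, and let $\mathbb{Q}_{\mathrm{paths}}^{t_0\shortto\cdots\shortto 0}$ be the law of the trajectory produced by the parallel sampler, i.e.\ the Markov chain that replaces each true one-step joint transition $P_{X^1_{t_{i+1}},X^2_{t_{i+1}}\mid\mathbf{x}_{t_i}}$ by the product of its token marginals $P_{X^1_{t_{i+1}}\mid\mathbf{x}_{t_i}}\otimes P_{X^2_{t_{i+1}}\mid\mathbf{x}_{t_i}}$. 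Both chains start from the same law $\mathbb{P}_T=\mathbb{Q}_T$, both are Markov on the grid (the skeleton of a CTMC is a Markov chain, and the parallel sampler is Markov by construction), and $\mathbb{P}_0$, $\mathbb{Q}_0^{T\shortto\cdots\shortto 0}$ are precisely the coordinate-$t_N$ marginals of $\mathbb{P}_{\mathrm{paths}}$, $\mathbb{Q}_{\mathrm{paths}}^{t_0\shortto\cdots\shortto 0}$.

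Since marginalization cannot increase relative entropy (data processing), $\mathcal{D}_{\mathrm{KL}}(\mathbb{P}_0\,\|\,\mathbb{Q}_0^{T\shortto\cdots\shortto 0}) \le \mathcal{D}_{\mathrm{KL}}(\mathbb{P}_{\mathrm{paths}}\,\|\,\mathbb{Q}_{\mathrm{paths}}^{t_0\shortto\cdots\shortto 0})$. Ordering the coordinates in the sampling direction and applying the chain rule for KL together with the Markov property of both chains (the conditional law of $X_{t_{i+1}}$ given the entire past depends only on $X_{t_i}$, so the conditioning on the internal path variables reduces to conditioning on $\mathbf{x}_{t_i}$), the right-hand side factors as
\begin{equation*}
\mathcal{D}_{\mathrm{KL}}(\mathbb{P}_{\mathrm{paths}}\,\|\,\mathbb{Q}_{\mathrm{paths}}^{t_0\shortto\cdots\shortto 0}) = \mathcal{D}_{\mathrm{KL}}(\mathbb{P}_T\,\|\,\mathbb{Q}_T) + \sum_{i=0}^{N-1}\mathbb{E}_{\mathbf{x}_{t_i}\sim\mathbb{P}_{t_i}}\!\Big[\mathcal{D}_{\mathrm{KL}}\big(P_{X^1_{t_{i+1}},X^2_{t_{i+1}}\mid\mathbf{x}_{t_i}}\,\big\|\,P_{X^1_{t_{i+1}}\mid\mathbf{x}_{t_i}}\otimes P_{X^2_{t_{i+1}}\mid\mathbf{x}_{t_i}}\big)\Big],
\end{equation*}
and the first term vanishes because the two processes share the same initial law.

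By the definitions \eref{equation:CDE} and \eref{equation:CDE_marginal}, each summand equals $\mathbb{E}_{\mathbf{x}_{t_i}}[\mathcal{E}_{\mathrm{CDE}}(t_i\shortrightarrow t_{i+1}\mid\mathbf{x}_{t_i})] = \mathcal{E}_{\mathrm{CDE}}(t_i\shortrightarrow t_{i+1})$, which yields \eref{eq:KL_CDE}. The general sequence-length case is identical after replacing the product of two token marginals by the product over all $d$ token marginals, so that the per-step term becomes the total correlation (multi-information) of the $d$ tokens at time $t_{i+1}$; this is spelled out in \aref{appendix:theorem1}.

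The only inequality used is the data-processing step; everything else is an identity. Accordingly, the part that needs care is the bookkeeping in the chain-rule expansion: confirming that the true reverse process restricted to the grid is genuinely Markov, so that its path-space factorization matches that of the parallel sampler term by term, and that the two chains are launched from the same distribution so that no residual initial term survives. One should also record the absolute-continuity condition ($\mathbb{P}_{t_{i+1}\mid t_i}\ll\mathbb{Q}_{t_{i+1}\mid t_i}$, i.e.\ the product-of-marginals kernels have full support wherever the true kernels do) under which all the conditional KL terms are well defined.
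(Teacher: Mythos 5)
Your proposal is correct and takes essentially the same route as the paper: the paper's Appendix A.1 proof establishes the single-step inequality $\mathcal{D}_\mathrm{KL}(\mathbb{P}_t\Vert\mathbb{Q}_t)\le\mathcal{D}_\mathrm{KL}(\mathbb{P}_s\Vert\mathbb{Q}_s)+\mathcal{E}_\mathrm{CDE}(s\shortrightarrow t)$ via exactly your data-processing-plus-chain-rule decomposition of the two-time joint KL, and then iterates it over the grid, which is just your full-path chain-rule expansion carried out inductively. The bookkeeping points you flag (Markovianity of the grid skeleton, matching initial laws, expectation over the true marginal $\mathbb{P}_{t_i}$) all check out against the paper's argument.
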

{This theorem suggests that effectively allocating the time schedule to minimize the CDEs can implicitly reduce the discrepancy between the true distribution and the approximate distribution obtained from parallel sampling. Motivated by this, in the next section, we derive a tractable upper bound for 
\( \sum_{i=0}^{N-1} \mathcal{E}_{\mathrm{CDE}}(t_i \shortrightarrow t_{i+1}) \) that depends on the sampling schedule \( \{  T \shortto t_1 \shortto \dots \shortto t_{N-1} \shortto 0 \} \) to facilitate its optimization.}


\subsection{Estimating the Compounding Decoding Error Using Girsanov's Theorem}
\label{subsection:section3.2}

As shown in \eref{equation:CDE}, computing $\mathcal E_{\mathrm{CDE}}(s \shortrightarrow t | \mathbf{x}_s)$ involves determining the KL divergence between the true distribution and the approximated distribution from DDM's parallel sampler, which is often intractable. To address this, we treat the ground truth reverse process and the sampling process from DDM's parallel samplers (introduced in \sref{section:background}) as two CTMCs, starting from the same initial distribution. By applying \textbf{Girsanov's theorem}~\citep{ding2021markov, chen2022sampling}, we derive a tractable formula to compare the KL divergence between the distributions of these stochastic processes at any time interval $[s, t]$. We summarize this as the following general theorem applicable to any two backward CTMCs with $R^1_t$ and $R^2_t$ as their respective transition rate matrices:
\[
\begin{cases}
\textrm{CTMC 1}: & q^1_{u - du|u}(y \mid x) = \delta_{xy} + R^1_t(x,y) du + o(du), \\ 
\textrm{CTMC 2}: & q^2_{u - du|u}(y \mid x) = \delta_{xy} + R^2_t(x,y) du + o(du).
\end{cases}
\]
 We defer its proof to \aref{appendix:theorem2}.

\begin{theorem}
\label{theorem:2}
{\normalfont{(KL-Divergence Upper Bound, KLUB)}}
Consider an interval $[s, t]$ ($s > t$). If both CTMCs start from the same initial distribution, $\pi_s=\mathbb{P}_s = \mathbb{Q}_s$, then we have:
\begin{equation}
\mathcal{D}_{\mathrm{KL}}(\mathbb{P}_t \| \mathbb{Q}_t) 
\le \mathcal{D}_{\mathrm{KL}}(\mathbb{P}_{\mathrm{paths}} \| \mathbb{Q}_{\mathrm{paths}}) 
= 
\underbrace{\mathbb{E}_{\mathbb{P}_{\mathrm{paths}}} \left[
\sum_{i \ne j} \sum_{\substack{t < u \le s}} H_{u}^{ij} \log \frac{R^1_u(i,j)}{R^2_u(i,j)}
\right]}_{\triangleq~\textrm{\normalfont{KLUB}}_{\pi_s}(\mathbb{P}_t \| \mathbb{Q}_t)}. 
\label{equation:KLUB}
\end{equation}

Here, $\mathbb{P}_t$ and $\mathbb{Q}_t$ are the probability distributions at time $t$ resulting from CTMC 1 and CTMC 2, respectively. $\mathbb{P}_{\mathrm{paths}}$ and $\mathbb{Q}_{\mathrm{paths}}$ denote the distributions over their path spaces $(X_u)_{u \in [t, s]}$, generated by CTMC 1 and CTMC 2, respectively. The indicator function $H_u^{ij}$ is defined as $H_u^{ij} = 1$ if a transition from state $i$ to $j$ occurs at time $u$, and $H_u^{ij} = 0$ otherwise.

\end{theorem}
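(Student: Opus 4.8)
The plan is to exploit the data-processing inequality together with an explicit Girsanov-type change-of-measure formula on path space for continuous-time Markov chains. First I would reduce the marginal statement to a path-space statement: since $\mathbb{P}_t$ and $\mathbb{Q}_t$ are the time-$t$ marginals (pushforwards under the evaluation map $(X_u)_{u\in[t,s]} \mapsto X_t$) of the path measures $\mathbb{P}_{\mathrm{paths}}$ and $\mathbb{Q}_{\mathrm{paths}}$, the data-processing inequality for KL divergence immediately gives $\mathcal D_{\mathrm{KL}}(\mathbb{P}_t \| \mathbb{Q}_t) \le \mathcal D_{\mathrm{KL}}(\mathbb{P}_{\mathrm{paths}} \| \mathbb{Q}_{\mathrm{paths}})$. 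So the real content is the identity for the path-space divergence.

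For the path-space identity, I would compute the Radon--Nikodym derivative $\frac{d\mathbb{P}_{\mathrm{paths}}}{d\mathbb{Q}_{\mathrm{paths}}}$. A piecewise-constant CTMC trajectory on $[t,s]$ is determined by its jump times $t < u_1 < u_2 < \dots < u_K \le s$ and the states visited; under a chain with rate matrix $R_u$, the likelihood of such a trajectory factorizes into (i) the "survival" factors $\exp\!\big(\int_{u_k}^{u_{k+1}} R_u(x_{u_k}, x_{u_k})\,du\big)$ for holding in each state (recall $R_u(i,i) = -\sum_{j\ne i} R_u(i,j)$), and (ii) the jump factors $R_{u_k}(x_{u_k^-}, x_{u_k})$ at each transition. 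Since both CTMCs share the same initial distribution $\pi_s$, that factor cancels in the ratio, and one obtains
\[
\log \frac{d\mathbb{P}_{\mathrm{paths}}}{d\mathbb{Q}_{\mathrm{paths}}}
= \sum_{i\ne j}\sum_{t < u \le s} H_u^{ij} \log\frac{R^1_u(i,j)}{R^2_u(i,j)}
+ \sum_i \int_t^s \big(R^2_u(i,i) - R^1_u(i,i)\big)\,\mathbbm{1}[X_u = i]\,du .
\]
Taking $\mathbb{E}_{\mathbb{P}_{\mathrm{paths}}}$ of this expression gives $\mathcal D_{\mathrm{KL}}(\mathbb{P}_{\mathrm{paths}} \| \mathbb{Q}_{\mathrm{paths}})$. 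To recover exactly the stated KLUB I would then argue that the expected value of the second (compensator) term vanishes, or is absorbed: writing $R^1_u(i,i) - R^2_u(i,i) = \sum_{j\ne i}\big(R^2_u(i,j) - R^1_u(i,j)\big)$ and using Dynkin's formula / the fact that $H_u^{ij} - R^1_u(i,j)\mathbbm{1}[X_u=i]\,du$ is the compensated jump martingale under $\mathbb{P}_{\mathrm{paths}}$, the expectation $\mathbb{E}_{\mathbb{P}_{\mathrm{paths}}}\big[\sum_{t<u\le s} H_u^{ij}\,f(u)\big] = \mathbb{E}_{\mathbb{P}_{\mathrm{paths}}}\big[\int_t^s R^1_u(X_u, j)\,f(u)\,du\big]$ lets me rewrite the whole thing. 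One checks that after this substitution the compensator contributions recombine into the quantity $\mathbb{E}_{\mathbb{P}_{\mathrm{paths}}}\big[\int_t^s \sum_{i\ne j} R^1_u(i,j)\mathbbm{1}[X_u=i]\big(\log\frac{R^1_u(i,j)}{R^2_u(i,j)} - 1 + \frac{R^2_u(i,j)}{R^1_u(i,j)}\big)du\big]$, which is manifestly nonnegative (since $\log x - 1 + 1/x \ge 0$), confirming $\mathcal D_{\mathrm{KL}} \ge 0$ and identifying the leading jump-counting term as the announced upper bound on the path divergence — or, if the paper intends KLUB to equal the full path divergence, the identity is exact once the compensator term is kept; I would follow whichever convention \aref{appendix:theorem2} adopts.

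The main obstacle I anticipate is the rigorous justification of the change of measure: one needs mutual absolute continuity of $\mathbb{P}_{\mathrm{paths}}$ and $\mathbb{Q}_{\mathrm{paths}}$ on path space (which requires $R^1_u(i,j) = 0 \iff R^2_u(i,j) = 0$, so the log-ratios are well defined), plus an integrability condition ensuring the exponential martingale has expectation one and that the expectations above are finite. On a finite state space $\mathcal S$ with bounded, measurable rates these conditions are mild, but they should be stated as hypotheses; the combinatorial bookkeeping of jump times in the likelihood factorization and the martingale (compensator) argument for the second term are the two places where care is genuinely needed, the rest being routine.
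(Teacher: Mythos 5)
Your proposal follows the same overall route as the paper's proof in Appendix A.2: the data-processing inequality reduces the marginal bound to the path-space divergence, and a Girsanov-type change of measure for CTMCs gives the explicit expression; so the skeleton matches. Where you differ is in how the Radon--Nikodym derivative is obtained and, more importantly, in how the drift/compensator term is handled. The paper imports the exponential-martingale form of the likelihood ratio from the cited change-of-measure propositions (A.1--A.2) and then, in its step (18e), discards the integral $\int \sum_{i,j}\bigl(R^2_u(i,j)-R^1_u(i,j)\bigr)H^i_u\,du$ by appealing to the row-sum-zero property of rate matrices; you instead derive the ratio from the holding-time/jump factorization of the path density, which is more elementary and self-contained, and your bookkeeping exposes a genuine subtlety that the paper glosses over: since $\kappa_{ii}=0$, the drift term ranges only over off-diagonal pairs, and $\sum_{j\ne i}\bigl(R^2_u(i,j)-R^1_u(i,j)\bigr)=R^1_u(i,i)-R^2_u(i,i)$ is not identically zero, so the claimed exact equality between $\mathcal{D}_{\mathrm{KL}}(\mathbb{P}_{\mathrm{paths}}\|\mathbb{Q}_{\mathrm{paths}})$ and the jump-counting sum holds only when the two chains have equal total outflow intensity from every state along the trajectory. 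Your compensated expression
$\mathbb{E}_{\mathbb{P}_{\mathrm{paths}}}\bigl[\int_t^s\sum_{j\ne X_u}R^1_u(X_u,j)\bigl(\log\tfrac{R^1_u(X_u,j)}{R^2_u(X_u,j)}-1+\tfrac{R^2_u(X_u,j)}{R^1_u(X_u,j)}\bigr)\,du\bigr]$
is the correct general formula for the path KL and is manifestly nonnegative, whereas the jump-counting term alone need not be. The one weakness in your write-up is the final hedge (``follow whichever convention the appendix adopts''): a complete proof must commit, either by stating the extra hypothesis under which the drift term vanishes or by keeping the compensator and defining KLUB as the full expression. Your explicit hypotheses on mutual absolute continuity ($R^1_u(i,j)=0 \iff R^2_u(i,j)=0$) and integrability are appropriate and are indeed left implicit in the paper.
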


We denote the rightmost term in \eref{equation:KLUB} as the \emph{Kullback-Leibler Divergence Upper Bound (KLUB)}, which quantifies the mismatch between distributions generated by different CTMCs based on their rate matrices. Theorem~\ref{theorem:2} demonstrates that the difference in trajectories sampled from different CTMCs depends on the \textbf{ratio of their rate matrices}. 

Consider CTMC 1 as the ground truth reverse CTMC and CTMC 2 as the reverse CTMC obtained via parallel sampling by substituting the forward transition kernel with the corresponding reverse-in-time kernel. Thus, $\mathcal{E}_{\mathrm{CDE}}(s \shortrightarrow t | \mathbf{x}_s)$ can be expressed as the KL divergence between the output distributions of the two CTMCs at time $t$, starting from the initial point $\mathbf{x}_s$. This leads to the upper bound (Proof in \aref{appendix:technique0}):
%
%
\begin{equation}
\mathcal E_{\mathrm{CDE}}(s \shortrightarrow t) 
\le 
\textrm{\normalfont{KLUB}}_{\mathbb{P}_s}(\mathbb{P}_t \| \mathbb{Q}^{s\shortto t}_t),
\label{equation:CDEbound}
\end{equation}
{where $\mathbb{Q}^{s\shortto t}_t$ is given by the process discretized at time $s$ and $t$.
}
This expectation considers the distribution of states at time \( s \) and offers a more comprehensive measure of the CDE over the interval \( [s, t] \).
{
Moreover, from the additivity of KLUB, we can bound the sum of CDEs as
\begin{equation}
    \mathcal E_{\mathrm{CDE}}(s \shortrightarrow t) + \mathcal E_{\mathrm{CDE}}(t \shortrightarrow u)
    \le \textrm{\normalfont{KLUB}}_{\mathbb{P}_s}(\mathbb{P}_t \| \mathbb{Q}^{s\shortto t}_t)
    + \textrm{\normalfont{KLUB}}_{\mathbb{P}_t}(\mathbb{P}_u \| \mathbb{Q}^{t\shortto u}_u)
    = \textrm{\normalfont{KLUB}}_{\mathbb{P}_s}(\mathbb{P}_u \| \mathbb{Q}^{s\shortto t\shortto u}_u),
\label{equation:CDE-KLUB}
\end{equation}
which shows that the KLUB can be useful for comparing the quality of
discretization of the interval $[s, u]$ with different break point $t$. 
}
\yh{We can easily extend this result for the sum of CDEs over the entire sampling schedule $\{ T = t_0 \shortto t_1 \shortto \dots \shortto t_{N-1} \shortto 0 = t_{N} \}$.}
\yh{
\begin{equation}
    \mathcal{D}_{\mathrm{KL}}(\mathbb{P}_0 \| \mathbb{Q}_0^{T\shortto t_1 \shortto \cdots \shortto 0}) 
    \le 
    \sum_{i=0}^{N-1} \mathcal{E}_{\mathrm{CDE}}(t_i \shortrightarrow t_{i+1})
    \le 
    \textrm{\normalfont{KLUB}}_{\mathbb{P}_T}(\mathbb{P}_0 \| \mathbb{Q}^{T\shortto t_1 \shortto \cdots \shortto 0}_0)
    ,
\end{equation}
where inequality on the left-hand side comes from Theorem~\ref{theorem:1}. 

To summarize, optimizing the sampling schedule involves finding a set of timesteps \( \{ T \shortto t_1 \shortto \dots \shortto t_{N-1} \shortto 0 \} \) that minimizes the KLUB on the right-hand side. This approach approximately reduces the cumulative CDE (middle) and provides an upper bound on the KL divergence between the true distribution and the sampled distribution for the given schedule (left-hand side).

} 

\subsection{Feasible Computation with Hierarchical Breakdown Strategy}
\label{subsection:section3.3}

Using the derived KLUB, we can formulate the timestep search as a minimization problem over KLUB. Here, we employ a hierarchical breakdown strategy, dividing a coarser sampling schedule into a finer one, as shown in \fref{figure:hierarchical-breakdown}. Suppose our sampling schedule is given by \(\{ T \shortto t \shortto 0 \}\). Let \(\mathbb{Q}_0^{T \shortto t \shortto 0}\) represent the distribution generated by this schedule. Our goal is to find the optimal \(t\) that minimizes cumulative CDE, i.e., 
$\mathcal{E}_{\mathrm{CDE}}(T \shortrightarrow t) + \mathcal{E}_{\mathrm{CDE}}(t \shortrightarrow 0)$. 
This is approximately achievable by minimizing its KLUB upper bound:
\begin{equation}
t_1 = \argmin_{t\in(T,0)} \textrm{\normalfont{KLUB}}(
\mathbb{P}_0 \| \mathbb{Q}_0^{T \shortto t \shortto 0})
\label{equation:optimization}
\end{equation}
With the initial refined interval \(\{ T \shortto t_t \shortto 0 \}\), we seek optimal timesteps \(t_2 \in (T, t_1)\) and \(t_3 \in (t_1, 0)\) by solving the following minimization problems: 
\[
t_2 \in \argmin_{t\in(T,t_1)} \textrm{\normalfont{KLUB}}(
\mathbb{P}_{t_1} \| \mathbb{Q}_{t_1}^{T \shortto t \shortto t_1}) \quad \text{and} \quad t_3 \in \argmin_{t\in(t_1,0)} \textrm{\normalfont{KLUB}}(
\mathbb{P}_{0} \| \mathbb{Q}_{0}^{t_1 \shortto t \shortto 0}).
\] 

\begin{wrapfigure}{r}{7.0cm}
\vspace{-1.25em}
\includegraphics[width=6.5cm]{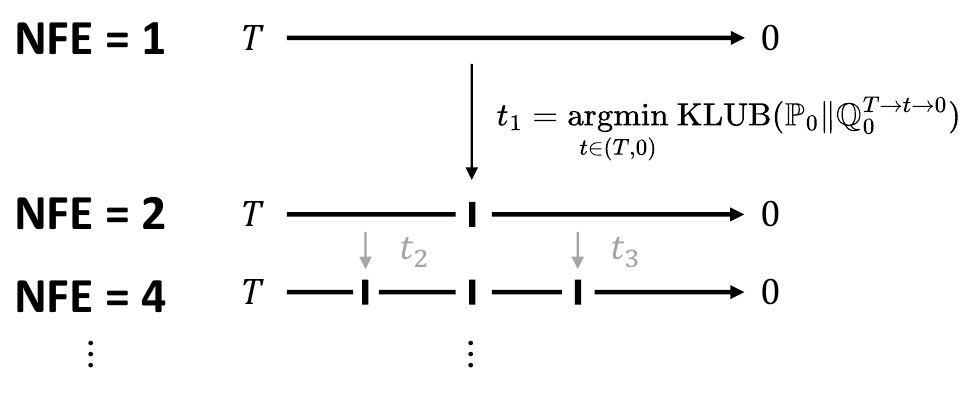}
\caption{
    We optimize the sampling schedule by refining it from coarse intervals to finer intervals, using a hierarchical breakdown strategy.
}
\vspace{-0.5em}
\label{figure:hierarchical-breakdown}
\end{wrapfigure}

The first minimization problem targets matching \(\mathcal{D}_\mathrm{KL}(\mathbb{P}_{t_1} \| \mathbb{Q}_{t_1})\), while the second focuses on matching \(\mathcal{D}_\mathrm{KL}(\mathbb{P}_{0} \| \mathbb{Q}_{0})\).
This results in a further refined sampling schedule \(\{T \shortto t_2 \shortto t_1 \shortto t_3 \shortto 0 \}\). By iterating this process, we continue splitting each interval into smaller ones, optimizing breakpoints using the KLUB criterion. After \(K\) iterations, this hierarchical strategy yields a sampling schedule with \(2^K\) NFEs, optimizing the schedule as the number of steps increases.


\subsection{Feasible Computation for KLUB Estimation}
\label{subsection:section3.4}

Directly estimating \(\textrm{\normalfont{KLUB}}_{\mathbb{P}_T}(\mathbb{P}_0 \| \mathbb{Q}_0^{T \shortto t \shortto 0})\) by using \eref{equation:KLUB} is practically infeasible due to computational complexities. To address this issue, we propose two techniques that simplify the estimation process.



\paragraph{Technique 1: Maximizing KL Divergence from a Coarser Approximation}
Instead of minimizing the mismatch between the ground truth distribution \(\mathbb{P}_\text{paths}\) and \(\mathbb{Q}_\text{paths}^{T \shortto t \shortto 0}\), we choose to maximize the discrepancy between \(\mathbb{Q}_\text{paths}^{T \shortto t \shortto 0}\) and a simpler, coarser approximation \(\mathbb{Q}_\text{paths}^{T \shortto 0}\). The key insight is that maximizing this divergence, we can find the optimal sampling time \(t\), which helps reduce the compounding error relative to the true distribution. This relationship can be approximated as (\aref{appendix:technique1}):

\[
\mathcal{D}_\mathrm{KL}(\mathbb{Q}_\text{paths}^{T\shortto t \shortto 0} \| \mathbb{Q}_\text{paths}^{T\shortto 0})
\approx 
\mathcal{D}_\mathrm{KL}(\mathbb{P}_\text{paths} \| \mathbb{Q}_\text{paths}^{T\shortto 0})
- \mathcal{D}_\mathrm{KL}(\mathbb{P}_\text{paths} \| \mathbb{Q}_\text{paths}^{T\shortto t \shortto 0}),
\]

Thus, by maximizing the divergence on the left-hand side, we effectively minimize the discrepancy \(\mathcal{D}_\mathrm{KL}(\mathbb{P}_\text{paths} \,\big\|\, \mathbb{Q}_\text{paths}^{T\shortto t \shortto 0})\) between the true distribution \(\mathbb{P}_\text{paths}\) and the optimized schedule \(\mathbb{Q}_\text{paths}^{T\shortto t \shortto 0}\).

Based on this, we can rewrite our optimization objective as follows (Proof in \aref{appendix:technique1}):
\begin{equation}
t^*
= \argmax_{t\in(T,0)} 
\textrm{\normalfont{KLUB}}_{\mathbb{Q}_T}(
\mathbb{Q}_0^{T \shortto t \shortto 0} \| \mathbb{Q}_0^{T \shortto 0})
= \argmax_{t\in(T,0)} \mathbb{E}_{\mathbb{Q}_{\text{paths}}^{T\shortto t \shortto 0}} \left[
\sum_{i \ne j} \log \frac{R_t(i,j)}{R_T(i,j)} 
\sum_{0 < u \le t} H_{u}^{ij} 
\right]
\label{equation:technique1}
\end{equation}

Equation \eref{equation:technique1} offers a significant computational advantage over \eref{equation:KLUB} by eliminating the need to compute the rate matrix at every transition time. Note that, calculating the reverse rate matrix involves neural network evaluations. Given a trajectory \((X_u)_{u \in [T, 0]}\), \eref{equation:KLUB} requires computing the rate matrix \(R_u\) for all transitions where \(H_u^{ij} = 1\). In contrast, \eref{equation:technique1} decouples the rate matrix from the transition times \(u\), requiring computations only at times \(T\) and \(t\). Moreover, between times \(T\) and \(t\), the rate matrices for both \(\mathbb{Q}^{T \shortto t \shortto 0}_{\text{paths}}\) and \(\mathbb{Q}^{T \shortto 0}_{\text{paths}}\) are equal to \(R_T\), eliminating the need for additional computations in this interval.

\paragraph{Technique 2: Applying the Law of Total Expectation} 
Instead of directly compute the expectation over $\mathbb{Q}_{\text{paths}}^{T\shortto t \shortto 0}$, we compute expectation of conditional expectation, using the law of total expectation, i.e., $\mathbb{E}[X] = \mathbb{E}[\mathbb{E}[X|Y]]$: 
\[
\mathbb{E}_{\mathbb{Q}_{\text{paths}}^{T\shortto t \shortto 0}}
\left[ 
\sum_{i \ne j} \log \frac{R_t(i,j)}{R_T(i,j)} 
\sum_{0 < u \le t} H_{u}^{ij} 
\right] = 
\mathbb{E}_{\mathbb{Q}_{\text{paths}}^{T\shortto t}}
\left[ 
\mathbb{E}_{\mathbb{Q}_{\text{paths}}^{t \shortto 0}}
\left[ 
\sum_{i \ne j} \log \frac{R_t(i,j)}{R_T(i,j)} 
\sum_{0 < u \le t} H_{u}^{ij} 
\Bigg| X_t = i
\right]
\right]
\]

On the left side, to compute the expected value, we would need to sample the entire trajectory \((X_u)_{u \in [T, 0]}\) and then calculate the inner sum. On the right side, this process is broken down into two steps: first, we sample a trajectory from \(X_T\) to \(X_t\), and then, given \(X_t\), we sample from \(X_t\) to \(X_0\) to compute the inner expectation. Our main insight is that the expected value in the second step can be obtained in closed form.

The term \(\sum_{0 < u \le t} H_u^{ij}\) counts the number of transitions from state \(i\) to state \(j\) in the interval \([t, 0)\). Since \(\mathbb{Q}_{\text{paths}}^{t \shortto 0}\) uses a single rate matrix over this interval, for \(i \ne j\), the expected number of transitions from \(i\) to \(j\) is approximately given by
\citep[Sec.~4.3]{campbell2022continuous}:
\[
\mathbb{E}_{\mathbb{Q}_{\text{paths}}^{t \shortto 0}}
\left[\sum_{0 < u \le t} H_{u}^{ij}\right] 
\approx
R_t(i, j) \Delta t
\]
where $\Delta t$ represents the size of the time interval.

Using this result, we can simplify Equation \eref{equation:technique1} as follows (Derivation in \aref{appendix:technique2}):
\begin{equation}
\textrm{KLUB}_{\mathbb{Q}_T} \big(\mathbb{Q}_0^{T \shortto t \shortto 0} \,\big\|\, \mathbb{Q}_0^{T \shortto 0}\big) 
\approx \mathbb{E}_{\mathbb{Q}_{\text{paths}}^{T \shortto t}}
\left[
\sum_{j \ne X_t} \log \frac{R_t(X_t, j)}{R_T(X_t, j)} \times R_t(X_t, j) \Delta t  
\right].
\label{equation:technique2}
\end{equation}

\begin{wrapfigure}{r}{4.75cm}
\vspace{-1.5em}
\includegraphics[width=4.75cm]{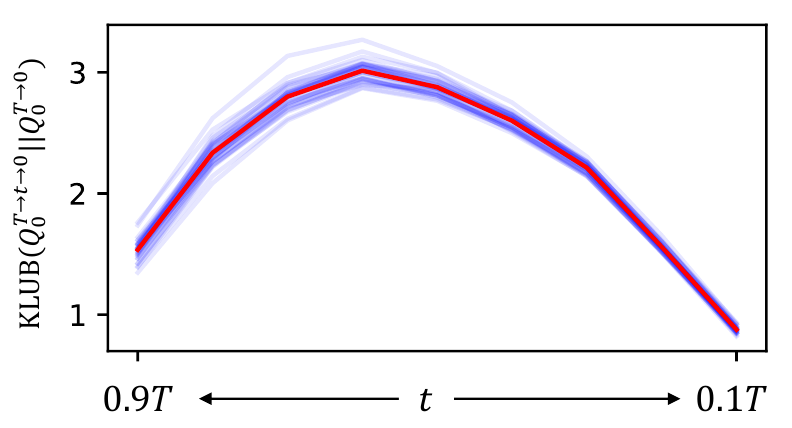}
\caption{
    The values of 
    KLUB with respect to \(t\).
    Blue lines show estimated results from individual $(X_t)_{t\in[T,0]}$, while the red line is the average.
}
\label{figure:klub-distribution}
\end{wrapfigure}

In \eref{equation:technique1}, obtaining a reliable KLUB estimation requires sampling trajectories to capture transitions between various states \(i\) and \(j\), which is both inaccurate and sample-inefficient—especially when dealing with large state spaces. For instance, in text generation tasks where the state space can be around 50,257, it's practically impossible to estimate the transition ratios between all pairs \((i, j)\) through sampling alone. In contrast, \eref{equation:technique2} allows us to compute this component in closed form, leading to more reliable and efficient calculations. The full algorithm for KLUB computation, combining Techniques 1 and 2, is provided in \aref{appendix:algorithm1}.

Now, we can maximize \eref{equation:technique2} using a standard optimization algorithm. Before selecting the algorithm, we conduct a preliminary check to observe how the KLUB value changes with respect to \(t\) (\fref{figure:klub-distribution}). Interestingly, it exhibits a unimodal shape. Since we are working with a single variable \(t\) in the unimodal optimization landscape, we use the golden section search \citep{press2007numerical}, a well-known one-dimensional search algorithm, to find the value of \(t\) that maximizes KLUB. This method has the advantage of not relying on hyperparameters like learning rate, which can significantly affect performance.

\section{Experiments}

In this section, we evaluate the \textit{Jump Your Steps} (JYS) sampling schedule across various datasets and models. We compare the JYS schedule with the uniform sampling schedule, which sets all intervals to the same size. Except for the Countdown dataset, we use open-sourced pretrained models for our experiments. It is important to note that the Gillespie algorithm is only applicable to an absorbing transition matrix, as uniform or Gaussian transition kernels do not have a fixed number of transitions. For further experimental details and additional qualitative results, please refer to the Appendix~\ref{appendix:experiment-details} and \ref{appendix:additional-results}.

\begin{wrapfigure}{r}{5.5cm}
\vspace{-3.0em}
\includegraphics[width=5.5cm]{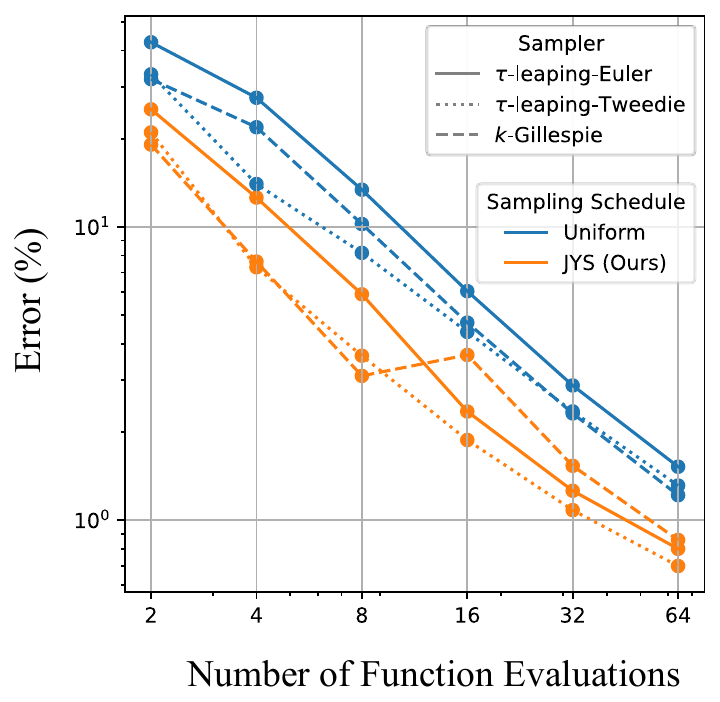}
\caption{
    Performance comparisons on Countdown. 
    The JYS schedule enhances sampling quality across different types of samplers.
}
\vspace{-0.5em}
\label{figure:countdown}
\end{wrapfigure}

\subsection{The CountDown Dataset}

Following \cite{zhao2024informed}, to evaluate our sampling schedule performance, we created a synthetic sequence dataset with a strong position-wise correlation structure. Each sample consists of 256 tokens, and each token has a value between 0 and 31. Each data sequence \( X^{0:255} \) is generated according to the following rules:

\begin{align*}
X^0 
&\sim \text{Uniform} \{1, \ldots, S\}, \\
X^{d+1} \mid X^d &\sim 
\begin{cases}
\delta_{X^d-1} & \text{if } X^d \neq 0 \\
\text{Uniform} \{1, \ldots, S\} & \text{if } X^d = 0
\end{cases}
\end{align*}

We trained a SEDD \citep{lou2023discrete} with an absorb transition matrix on this generated data. We measure the model performance by the proportion of generated samples that violated the rule, i.e., failed to count downwards from the previous token. The results are shown in \fref{figure:countdown}. We observe that the JYS schedule has fewer errors compared to the uniform schedule for the same NFE.


\subsection{CIFAR-10}

\begin{figure}[t]
    \centering
    \includegraphics[width=1\linewidth]{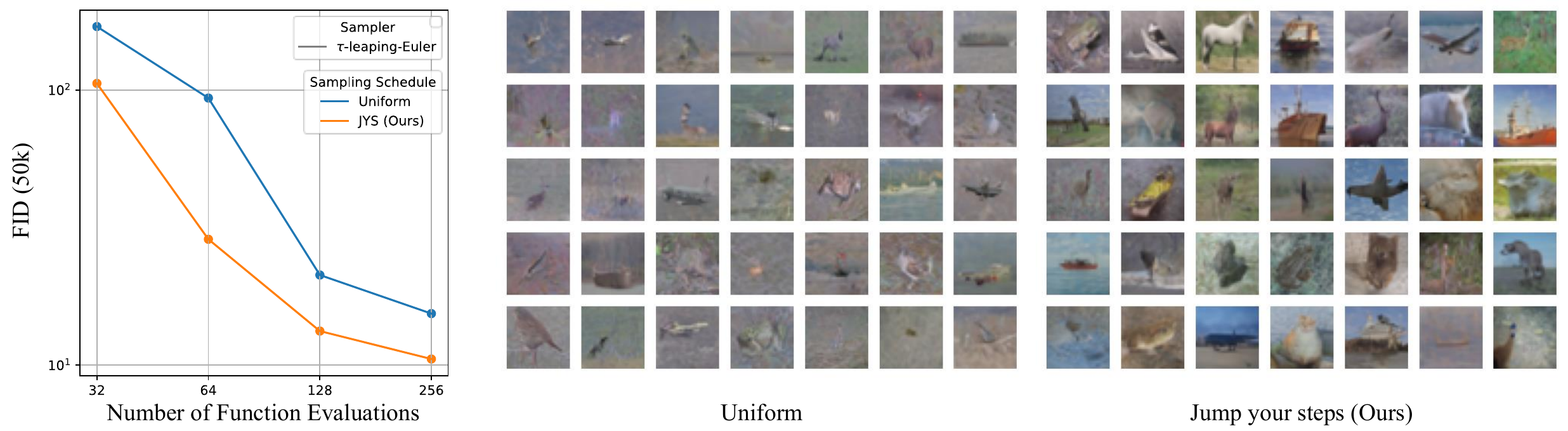}
    \caption{
    \figleft~Performance comparisons on CIFAR-10.
    \figright~Samples generated using the uniform and JYS schedules, both with NFE 64.
    }
    \label{figure:CIFAR-10-qual}
\end{figure}

We demonstrate our sampling schedule in the image domain. For this experiment, we use a pretrained model from \cite{campbell2022continuous}, which employs a gaussian transition matrix and denoising parameterization. Each data sample is a flattened image with a length of $3\times32\times32$, composed of tokens with values ranging from 0 to 255.

\fref{figure:CIFAR-10-qual} (left) shows the FID score using 50k samples with the number of function evaluations (NFE) from 32 to 256. We observe that, for all NFEs, the JYS schedule yields a better FID score at the same NFE compared to the uniform schedule. \fref{figure:CIFAR-10-qual} (right) shows randomly generated unconditional CIFAR-10 samples with NFE = 64. The uniform schedule produces blurry images, whereas the images generated using the JYS schedule exhibit clearer colors and shapes of objects. 

\subsection{Monophonic Music}

We test our method on conditional music generation using the Lakh pianoroll dataset \citep{raffel2016learning, dong2018musegan}. For this experiment, we employ a pretrained model from \cite{campbell2022continuous}, which uses a uniform transition matrix and denoising parameterization. Each data sequence contains 256 timesteps (16 per bar), and we measure performance by conditioning on two bars to generate the remaining 14 bars, following the setup in \cite{campbell2022continuous}.


We evaluate how different the generated results were when using a smaller NFE (from 2 to 64), compare to samples generated with an NFE of 512. Specifically, we calculate the Hellinger distance between the note distributions in the generated samples. The results are presented in \fref{figure:piano}. 
Given the same NFE, we observe that samples generated using our method were more similar to those generated with a high NFE.


\begin{figure}[t]
   \begin{floatrow}
     \ffigbox[\FBwidth]{
       \includegraphics[width=0.35\textwidth]{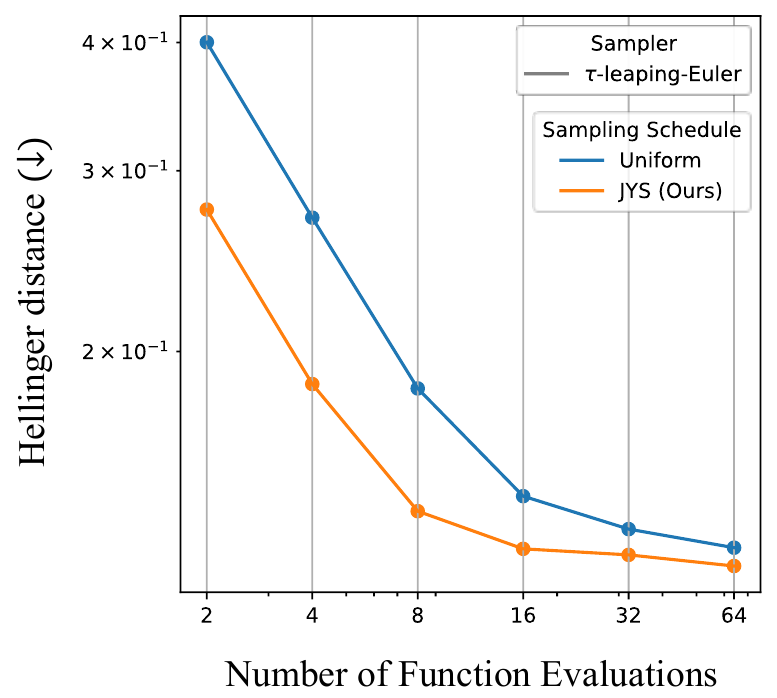} 
     }{
       \caption{Performance comparisons on Monophonic music.}
       \label{figure:piano}
     }
     \ffigbox[\FBwidth]{%
       \includegraphics[width=0.61\textwidth]{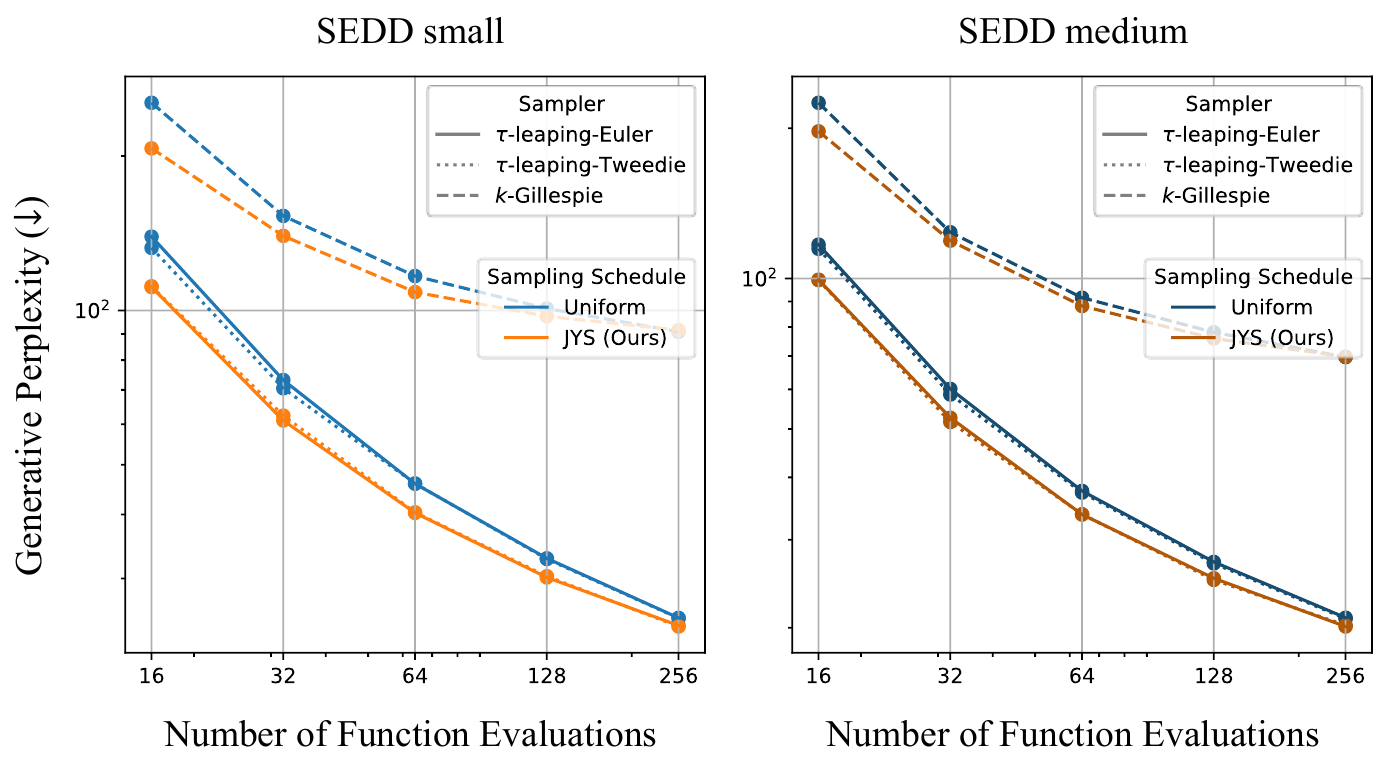}
     }{
        \caption{
        Performance comparisons on text generation. 
        Generative perplexity is measured by using GPT-2-large.
        }
        \label{figure:text-uncond}
     }
   \end{floatrow}
\end{figure}

\subsection{Text modeling}

Finally, we validate our method on text generation. For this experiment, we use a pretrained model from \cite{lou2023discrete}, which employs an absorbing transition matrix and score-based parameterization. We use two model sizes, SEDD-small and SEDD-medium, in the experiments; both models use the GPT-2 tokenizer and were trained on OpenWebText. The JYS schedule, optimized on SEDD-small, is also used for the experiments with SEDD-medium.

Following \cite{lou2023discrete}, we measure the generative perplexity of sampled sequences (using a GPT-2 large for evaluation). We generated 1,024 samples and each sample constructed with sequences of 1,024 tokens. We simulate 16 to 256 NFE for generation. \fref{figure:text-uncond} shows the results, demonstrating better perplexity at the same NFE.

\begin{figure}[t]
    \centering
    \includegraphics[width=1\linewidth]{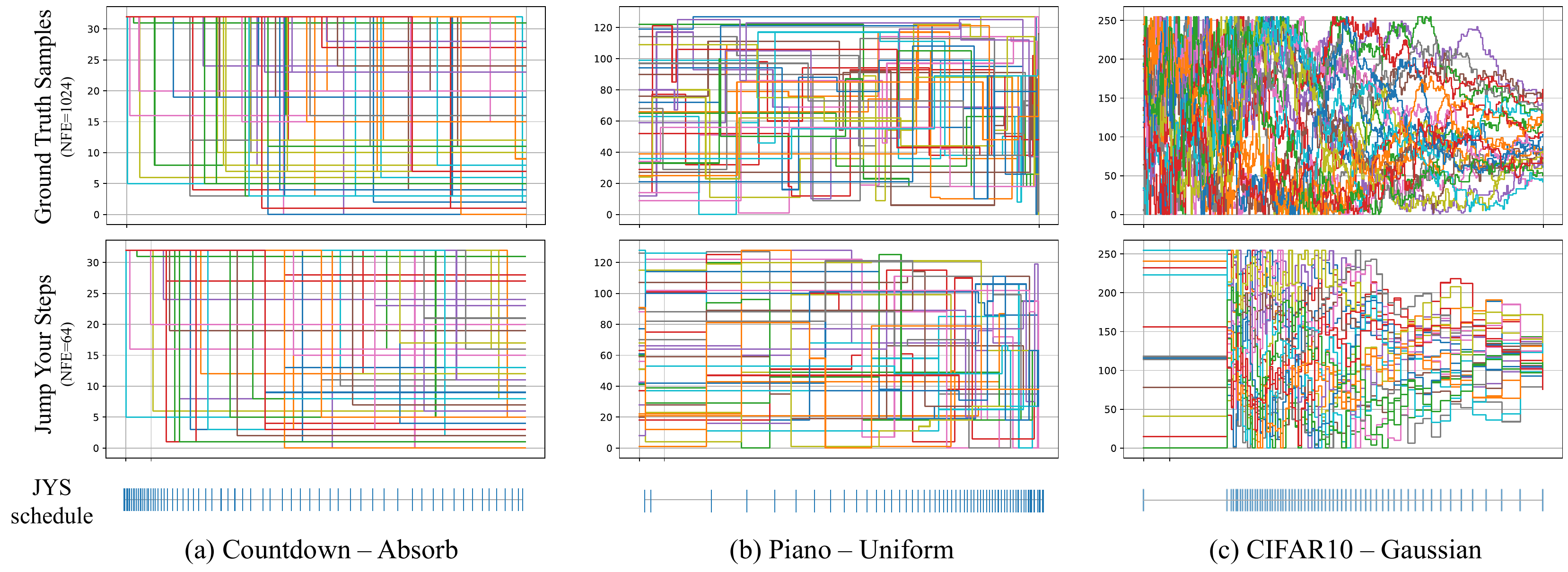}
    \caption{
    Sampling trajectories for different dataset-transition matrix combinations.
    \figtop{} Trajectories using infinitesimal timesteps.
    \figmiddle{} Trajectories using the JYS schedule.
    \figbottom{} Optimized sampling schedule with JYS.
    }
    \label{figure:jump-your-steps}
\end{figure}

\subsection{Characteristics of Jump-Your-Step Sampling schedule}

In \sref{subsection:section3.1}, we hypothesized that in regions where the conditional mutual information is low, the CDE would also be small, allowing steps to be skipped with minimal performance degradation. Here, we aim to verify if the JYS operates as expected according to this hypothesis.

\fref{figure:jump-your-steps} shows the JYS sampling schedules optimized for various transition matrices. First, in the \textit{absorb} case (Left), as discussed in \fref{fig:teaser} (Bottom), we observe that large intervals are concentrated toward the latter part of the process. This occurs because the previously generated tokens help reduce the uncertainty of other tokens. In contrast, in the \textit{uniform} case (Middle), large intervals appear at the beginning. This can be understood as a result of \( X_t \) following a uniform distribution, making the tokens independent, leading to lower mutual information for larger \( t \). Lastly, for the \textit{Gaussian} transition (Right), large intervals appear initially and then increase again over time. This pattern suggests that, initially, tokens behave independently like in the uniform case, but after a certain timestep, the effect of resolving uncertainty, similar to the absorb case, becomes more significant as more tokens are generated.



These observations demonstrate how the JYS schedule adapts to the underlying data distribution and token dependencies, effectively allocating computational resources where they are most needed to minimize the compounding error during parallel sampling.

\section{Related Work}

\subsection{Efficient Sampling for Continuous Diffusion Models}

After the seminal work by \cite{song2020score}, which interpreted diffusion models as a Stochastic or Ordinary Differential Equations (SDE/ODE), various SDE~\citep{jolicoeur2021gotta, xu2023restart} and ODE solvers~\citep{song2020denoising, lu2022dpm, zhang2022fast, dockhorn2022genie, liu2022pseudo, zheng2023dpm} have been proposed to improve sampling speed. 

The work most closely related to ours is ``Align Your Step''~\citep{sabour2024align}, which focuses on sampling schedule optimization in continuous diffusion models. In contrast, our approach targets discrete diffusion models, where we derive the KLUB for CTMCs to optimize the sampling schedule. We also propose a computationally efficient algorithm for KLUB computation and optimization.

\subsection{Discrete Diffusion Models}

Several approaches have been developed for training discrete diffusion models, including denoising parameterization~\citep{austin2021structured, campbell2022continuous, gu2022vector, gat2024discrete, campbell2024generative, shi2024simplified} and score parameterization~\citep{sun2022score, meng2022concrete, lou2023discrete}. Recently, SEDD has outperformed GPT-2 in text modeling, gaining traction as an alternative to autoregressive models~\citep{deschenaux2024promises}.

In terms of sampling, two main directions have emerged. The first focuses on efficient sampling, with $\tau$-leaping for CTMC and methods like analytic sampling~\citep{sun2022score}, Tweedie sampling \citep{lou2023discrete}, and $k$-Gillespie \citep{zhao2024informed} improving accuracy. The second aims to reduce compounding error via corrector steps, such as random correctors \citep{campbell2022continuous}, separate corrector training \citep{lezama2022discrete}, or enabling the model to act as an informed corrector \citep{zhao2024informed}.

These methods complement the sampling schedule optimization explored in this paper and can be used together for further improvements.

\section{Conclusions}

We present \textit{Jump Your Steps}, a principled method designed to optimize the sampling schedule and minimize these numerical errors without incurring additional computational costs during inference. Unlike existing approaches that rely on extra computational efforts, such as predictor-corrector methods, our technique operates independently and efficiently. Through extensive evaluations on synthetic and real-world datasets—including monophonic piano, image, and text generation—we demonstrate that our method consistently enhances performance across different transition kernels in discrete diffusion models and effectively complements various samplers.

\newpage
\bibliography{iclr2025_conference}
\bibliographystyle{iclr2025_conference}

\newpage

\appendix

\begingroup
\hypersetup{colorlinks=false, linkcolor=red}
\hypersetup{pdfborder={0 0 0}}
\renewcommand\ptctitle{}

\part{Appendix} 
\parttoc 


\endgroup

\newcommand{\pp}{\mathbb{P}}
\newcommand{\qq}{\mathbb{Q}}

\section{Theoretical Details}

\subsection{Proof of Theorem 3.1}
\label{appendix:theorem1}
Although we treated the two-dimensional case where we have $X_t=(X^1_t, X^2_t)$ in the main body,
we consider the general $d$-dimensional case with $X_t=(X_t^1, \ldots, X_t^d)$.
In that case, the definition of $\mathcal{E}_\mathrm{CDE}$ is given by
\begin{equation}
    \mathcal{E}_\mathrm{CDE}(s\to t|\mathbf{x}_s)\triangleq
    \mathcal{D}_\mathrm{KL}(P_{\mathbf{x}_t|\mathbf{x}_s} \Vert P_{X_t^1|\mathbf{x}_s} \otimes\cdots\otimes P_{X_t^d|\mathbf{x}_s}).
    \label{eq:d-dim-cde}
\end{equation}

In general, for a discrete probability distribution $\pp_\mathrm{prior}(\cdot)$
over the space $\mathcal{S}$
and a conditional distribution (or denoiser) $\pp_\mathrm{cond}(\cdot|\cdot)$ over the same space,
i,e, $\pp_\mathrm{cond}:\mathcal{S}\times\mathcal{S}\to\mathbb{R}$,
let us just write the resulting distribution as follows:
\begin{equation*}
    \pp_\mathrm{cond}\pp_\mathrm{prior}\triangleq
    \mathbb{E}_{\mathbf{x}\sim\pp_\mathrm{prior}}[\pp_\mathrm{cond}(\cdot|\mathbf{x})]
    =
    \sum_{x\in\mathcal{S}}\pp_\mathrm{cond}(\cdot|x)\pp_\mathrm{prior}(x).
\end{equation*}
Then, if we define $\pp_{t_{i+1}|t_i}(\cdot|\mathbf{x}_{t_i})\triangleq P_{X_{t_{i+1}}|\mathbf{x}_{t_i}}$
and $\qq_{t_{i+1}|t_i}(\cdot|\mathbf{x}_{t_i})\triangleq P_{X_t^1|\mathbf{x}_s} \otimes\cdots\otimes P_{X_t^d|\mathbf{x}_s}$
following \eqref{eq:d-dim-cde},
we can denote the target distributions in the theorem as follows:
\begin{equation*}
    \pp_0 = \pp_{t_N|t_{N-1}}\cdots \pp_{t_1|t_0}\pp_{t_0},
    \qquad
    \qq_0
    \triangleq \mathbb{Q}_0^{T\shortto t_1 \shortto \cdots \shortto 0}
    =
    \qq_{t_N|t_{N-1}}\cdots \qq_{t_1|t_0}\pp_{t_0}.
\end{equation*}
For simplicity,
let us also define the mid-time distributions as
\begin{equation*}
    \pp_{t_i} \triangleq \pp_{t_i|t_{i-1}}\cdots \pp_{t_1|t_0}\pp_{t_0},
    \qquad
    \qq_{t_i} \triangleq \qq_{t_i|t_{i-1}}\cdots \qq_{t_1|t_0}\pp_{t_0},
    \qquad
    \qq_{t_0} \triangleq \pp_{t_0}.
\end{equation*}

Let us now consider the case where $s=t_{i-1}$ and $t=t_{i}$ for some $i>0$
specifically.
Let
\[
    \pp_{t,s}(y,x)\triangleq\pp_{t|s}(y|x)\pp_s(x),
    \qquad \qq_{t,s}(y,x)\triangleq\qq_{t|s}(y|x)\qq_s(x)
\]
to denote joint distributions on $\mathcal{S}\times\mathcal{S}$.
Then, since $\pp_t=\pp_{t|s}\pp_s$ and $\qq_t=\qq_{t|s}\qq_s$
are marginal distributions of $\pp_{t,s}$ and $\qq_{t,s}$, we have
\begin{align}
    \mathcal{D}_\mathrm{KL}(\pp_t\Vert \qq_t)
    &\le \mathcal{D}_\mathrm{KL}(\pp_{t,s}\Vert\qq_{t,s}) \nonumber\\
    &= \sum_{y, x}\pp_{t|s}(y|x)\pp_s(x)\log\frac{\pp_{t|s}(y|x)\pp_s(x)}{\qq_{t|s}(y|x)\qq_s(x)}\nonumber\\
    &= \sum_{y, x}\pp_{t|s}(y|x)\pp_s(x)\log\frac{\pp_s(x)}{\qq_s(x)}
    + \sum_{y, x}\pp_{t|s}(y|x)\pp_s(x)\log\frac{\pp_{t|s}(y|x)}{\qq_{t|s}(y|x)}\nonumber\\
    &= \mathcal{D}_\mathrm{KL}(\pp_s\Vert\qq_s)
    + \sum_{x}\pp_s(x)\mathcal{D}_\mathrm{KL}(\pp_{t|s}(\cdot|x)\Vert \qq_{t|s}(\cdot|x))\nonumber\\
    &=\mathcal{D}_\mathrm{KL}(\pp_s\Vert\qq_s)
    + \mathbb{E}_{\mathbf{x}_s\sim \pp_s}[\mathcal{E}_\mathrm{CDE}(s\to t|\mathbf{x}_s)]\nonumber\\
    &=\mathcal{D}_\mathrm{KL}(\pp_s\Vert\qq_s) + \mathcal{E}_\mathrm{CDE}(s\to t).
    \label{eq:key-ineq}
\end{align}
By iteratively using \eqref{eq:key-ineq},
we obtain
\begin{align*}
    \mathcal{D}_\mathrm{KL}(\pp_0\Vert \qq_0)
    &= \mathcal{D}_\mathrm{KL}(\pp_{t_N}\Vert \qq_{t_N})\\
    &\le \mathcal{D}_\mathrm{KL}(\pp_{t_{N-1}}\Vert \qq_{t_{N-1}}) + \mathcal{E}_\mathrm{CDE}(t_{N-1}\to t_N)\\
    &\le \mathcal{D}_\mathrm{KL}(\pp_{t_{N-2}}\Vert \qq_{t_{N-2}}) + \mathcal{E}_\mathrm{CDE}(t_{N-2}\to t_{N-1})
        + \mathcal{E}_\mathrm{CDE}(t_{N-1}\to t_N)\\
    &\ \ \vdots\\
    &\le \mathcal{D}_\mathrm{KL}(\pp_{t_0}\Vert \qq_{t_0}) + \sum_{i=0}^{N-1}\mathcal{E}_\mathrm{CDE}(t_{i}\to t_{i+1}).
\end{align*}
Since we set $\pp_{t_0} = \qq_{t_0}$,
we have completed the proof.

\subsection{Proof of Theorem 3.2}
\label{appendix:theorem2}


Although we state the theorem for a backward CTMC from time $s$ to time $t$ ($t<s$)
and bound the KL-divergence at time $t$,
here we will just consider the forward CTMC from time $0$ to time $T$ and bound the KL-divergence at time $T$ for simplicity.
After this change of the time direction and interval,
the formal statement and proof of Theorem~\ref{theorem:2} is given in Theroem~\ref{thm:klub-forward-formal}.

To derive the KL-divergence upper bound (KLUB) for continuous-time Markov chains (CTMCs), we first adopt the change of measure $\frac{d\mathbb{P}_{\mathrm{paths}}}{d\mathbb{Q}_{\mathrm{paths}}}$ for CTMCs from Section 3 of \cite{ding2021markov}. Next, we compute and organize the equation for the $D_\mathrm{KL}({\mathbb{P}_{\mathrm{paths}} | \mathbb{Q}_{\mathrm{paths}}}) = \mathbb{E}_{\mathbb{P}_{\mathrm{paths}}}\left[\log\frac{d\mathbb{P}_{\mathrm{paths}}}{d\mathbb{Q}_{\mathrm{paths}}}\right]$.

We consider the following two forward CTMCs over $[0, T]$:
\[
    \begin{cases}
    \textrm{CTMC 1}: & q^1_{u+du|u}(y \mid x) = \delta_{xy} + R^1_t(x,y) du + o(du), \\ 
    \textrm{CTMC 2}: & q^2_{u+du|u}(y \mid x) = \delta_{xy} + R^2_t(x,y) du + o(du).
    \end{cases}
\]
Here, $R^1_t$ and $R^2_t$ represent the rate matrices of each CTMC, with a finite state space $S = \{x_1, \cdots, x_N\}$,
and $du>0$.

We introduce some notations. Define the functions $H_t^i, H_t^{ij}$ as follows:
\[
    H_t^{i}:=\delta(X_t-x_i), 
    \quad H_t^{i j}:= H_t^{j}H_{t-}^{i},
\]
where \(\delta(\cdot)\) denotes the Dirac delta function, and \(t-\) is the left limit of \(t\). By definition, \(H_t^{i j} = 1\) indicates a transition from \(x_i\) to \(x_j\) at time \(t\). The CTMC \((X_t)_{t \in [0, T]}\) is defined as a function from the sample space \(\Omega\) to the path space \(\mathcal{C} \triangleq [0, T] \times S\), i.e., \(X: \Omega \rightarrow \mathcal{C}\).

We define two probability measures over the path space:

\begin{itemize}
  \item $\mathbb{\mathbb{P}}_{\mathrm{paths}}$, under which $(X^1_t)_{t \in [0, T]}$ has the law of CTMC 1. 
  \item $\mathbb{\mathbb{Q}}_{\mathrm{paths}}$, under which $(X^2_t)_{t \in [0, T]}$ has the law of CTMC 2. 
\end{itemize}

First, we adopt the change of measure for CTMCs from \cite{ding2021markov}.

\begin{proposition}
(\citet{ding2021markov}, Eq. 3.2)
Consider a family of bounded real-valued processes $\{\kappa_t(i, j)\}_{i, j\in\{1,\cdots, N\}}$, such that $\kappa_t(i, j) > -1$ and $\kappa_{ii}(t) = 0$. Define $(\eta_t)_{0\le t \le T}$ as 
\[
\eta_t = e^{-L_t} \prod_{0 < u \le T} \left( 1 + \sum_{i,j=1}^N \kappa_u(i, j)H_u^{ij} \right),
\]
where $L_t = \int^t_0 \sum_{i,j=1}^N \kappa_u(i, j)R^1_u(i,j)H^i_u du$. This result implies the existence of a probability measure $\mathbb{Q}$ defined by
\begin{equation}
\label{appendix-eq-Q}
\frac{d\mathbb{Q}}{d\mathbb{P}} = \eta_t.
\end{equation}
\end{proposition}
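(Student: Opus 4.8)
The plan is to recognize $\eta_t$ as the Doléans--Dade stochastic exponential of a purely discontinuous martingale attached to CTMC~1, conclude that it is a strictly positive $\mathbb{P}$-martingale with $\eta_0 = 1$, and then read off the measure $\mathbb{Q}$.

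First I would build the relevant compensated jump martingale. By the Dynkin/martingale characterisation of a CTMC with rate matrix $R^1_u$ on the finite space $S$, for each ordered pair $i\neq j$ the counting process $\sum_{0<u\le t}H^{ij}_u$ of $x_i\to x_j$ transitions has predictable compensator $\int_0^t R^1_u(i,j)H^i_u\,du$, so
\[
    M^{ij}_t \triangleq \sum_{0<u\le t} H^{ij}_u - \int_0^t R^1_u(i,j)H^i_u\,du
\]
is a mean-zero $(\mathcal{F}_t)$-martingale. Setting $M_t \triangleq \sum_{i,j}\int_0^t \kappa_u(i,j)\,dM^{ij}_u = \sum_{0<u\le t}\sum_{i,j}\kappa_u(i,j)H^{ij}_u - L_t$, boundedness of $\kappa$ together with finiteness of $S$ gives integrability on $[0,T]$, so $M$ is an $(\mathcal{F}_t)$-martingale; it is of finite variation and purely discontinuous, with $\Delta M_u = \sum_{i,j}\kappa_u(i,j)H^{ij}_u$ (at a jump time only the single term of the transition that actually occurs is nonzero).

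The key step is the identity $\eta = \mathcal{E}(M)$. Since $M$ has no continuous martingale part, the Doléans--Dade formula reads $\mathcal{E}(M)_t = e^{M_t}\prod_{0<u\le t}(1+\Delta M_u)e^{-\Delta M_u}$. Because $\sum_{0<u\le t}\Delta M_u = M_t + L_t$ (the jump part of $M$), the scalar exponentials collapse to $e^{M_t}e^{-(M_t+L_t)} = e^{-L_t}$, whence $\mathcal{E}(M)_t = e^{-L_t}\prod_{0<u\le t}\bigl(1+\sum_{i,j}\kappa_u(i,j)H^{ij}_u\bigr) = \eta_t$. Since $\kappa_u(i,j)>-1$, every factor $1+\Delta M_u$ is strictly positive, so $\eta$ is a strictly positive local martingale — hence a supermartingale — with $\eta_0 = 1$.

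It then remains to upgrade $\eta$ to a true martingale on $[0,T]$, which gives $\mathbb{E}_{\mathbb{P}}[\eta_t]=1$ for every $t\le T$; this is the only place I expect genuine work. On a finite state space with rates bounded on $[0,T]$ the number of jumps of $(X_t)$ on $[0,T]$ is stochastically dominated by a Poisson variable and hence has finite exponential moments, and together with the uniform bound on $\kappa$ this yields a uniform-integrability (Novikov-type, or direct) estimate on $\eta$, so $\eta$ is a genuine $(\mathbb{P},\mathcal{F}_t)$-martingale. Finally I would define $\mathbb{Q}$ on $\mathcal{F}_T$ by $d\mathbb{Q}/d\mathbb{P} = \eta_T$; positivity of $\eta_T$ and $\mathbb{E}_{\mathbb{P}}[\eta_T]=1$ make this a probability measure, and the tower property $\mathbb{E}_{\mathbb{P}}[\eta_T\mid\mathcal{F}_t]=\eta_t$ shows its restriction to each $\mathcal{F}_t$ has density $\eta_t$, i.e. $d\mathbb{Q}/d\mathbb{P}=\eta_t$. (A self-contained alternative bypasses semimartingale calculus altogether: condition on the embedded jump chain and the holding times, write the explicit path-space likelihood of the CTMC, and verify $\mathbb{E}_{\mathbb{P}}[\eta_T]=1$ by induction on the number of jumps; this is more computational but avoids Doléans--Dade.)
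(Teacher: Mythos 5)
Your proof is correct, but note that the paper itself does not prove this statement: Proposition A.1 is imported verbatim from \citet{ding2021markov} (their Eq.~3.2) and is used in the appendix purely as a black box to derive the KLUB. Your argument is a legitimate self-contained derivation of that cited result, via the standard route: identify $\eta$ as the Dol\'eans--Dade exponential $\mathcal{E}(M)$ of the compensated jump martingale $M_t=\sum_{i\neq j}\int_0^t\kappa_u(i,j)\,dM^{ij}_u$, use $\kappa>-1$ for strict positivity, and upgrade the positive local martingale to a true martingale by dominating $\sup_{t\le T}\eta_t$ by $e^{CT}(1+K)^{N_T}$, where $N_T$ is the total jump count, which has finite exponential moments on a finite state space with bounded rates. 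Two small points are worth flagging. First, for $\int\kappa\,dM^{ij}$ to be a martingale you need $\kappa$ to be \emph{predictable}, not merely bounded; this hypothesis is implicit in the cited source and should be made explicit. Second, the product in the paper's display runs over $0<u\le T$ while $L_t$ runs only to $t$; your computation (correctly) takes both limits to be $t$, which is the only consistent reading, since the collapse $e^{M_t}\prod_{0<u\le t}e^{-\Delta M_u}=e^{-L_t}$ requires matching upper limits. What your approach buys over the paper's bare citation is a verification that the change of measure is actually well defined --- in particular that $\mathbb{E}_{\mathbb{P}}[\eta_T]=1$, which is precisely the point where Girsanov-type constructions can fail for unbounded rates or infinite state spaces; the cost is the semimartingale machinery, which the induction-on-jumps alternative you mention in passing would avoid.
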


The following result allows us to define \(\kappa_t(i, j)\) for two CTMCs.

\begin{proposition}
(\citet{ding2021markov}, Eq. 3.4)
For the probability measure $\mathbb{Q}$ defined in Eq.(\ref{appendix-eq-Q}), if $(X^1_t)_{t\in [0,T]}$ is a CTMC under $\mathbb{P}$ with rate matrix $R^1$ and $(X^2_t)_{t\in [0,T]}$ is a CTMC under $\mathbb{Q}$ with rate matrix $R^2$, then $R^2$ satisfies
\begin{align*}
R^2_{ii}(t) &= -\sum_{j\ne i} R^2_t(i, j), \\
R^2_t(i, j) &= (1 + \kappa_t(i, j)) R^1_t(i, j).
\end{align*}
\end{proposition}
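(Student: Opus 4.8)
The plan is to read the stated proposition as a Girsanov-type change-of-intensity theorem for continuous-time Markov chains. The preceding proposition (Eq.~3.2) already supplies the density process $\eta_t$ and grants that $d\mathbb{Q}=\eta\,d\mathbb{P}$ defines a bona fide probability measure, so the task here is purely to identify the $\mathbb{Q}$-rate matrix $R^2$. The diagonal relation $R^2_{ii}(t)=-\sum_{j\ne i}R^2_t(i,j)$ is not a computation at all but the defining property of any valid rate matrix (row sums vanish by conservation of total probability); hence the genuine content is the off-diagonal formula $R^2_t(i,j)=(1+\kappa_t(i,j))R^1_t(i,j)$. The whole argument rests on the martingale structure of $\eta$ together with the known $\mathbb{P}$-compensator of the transition-counting processes.

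First I would record the $\mathbb{P}$-intensities. Since $X^1$ is a CTMC with rate matrix $R^1$, for each ordered pair $i\ne j$ the counting process $N_t^{ij}\triangleq\sum_{0<u\le t}H_u^{ij}$ (the number of $i\to j$ transitions by time $t$) has $\mathbb{P}$-compensator $\int_0^t R^1_u(i,j)H^i_u\,du$, so that $dN_t^{ij}-R^1_t(i,j)H^i_t\,dt$ is a $\mathbb{P}$-martingale increment. Next I would exhibit $\eta_t$ from Eq.~3.2 as a Dol\'eans-Dade (stochastic) exponential: differentiating its product-and-exponential form gives $d\eta_t=\eta_{t-}\,dM_t$, where $M_t=\sum_{i\ne j}\int_0^t\kappa_u(i,j)\,(dN^{ij}_u-R^1_u(i,j)H^i_u\,du)$ is a local $\mathbb{P}$-martingale. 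Boundedness of $\kappa$ on the finite state space makes $\eta$ a true mean-one martingale, which is exactly what underlies the well-definedness of $\mathbb{Q}$ asserted in the previous proposition.

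The core step is the intensity transformation. By the Girsanov theorem for marked point processes, tilting $\mathbb{P}$ by $\eta$ multiplies the predictable intensity of each $N^{ij}$ by the factor $1+\kappa_u(i,j)$; that is, under $\mathbb{Q}$ the compensator of $N^{ij}$ becomes $\int_0^t (1+\kappa_u(i,j))R^1_u(i,j)H^i_u\,du$. Concretely I would verify this by checking that $N^{ij}_t-\int_0^t(1+\kappa_u(i,j))R^1_u(i,j)H^i_u\,du$ is a $\mathbb{Q}$-martingale, equivalently that its product with $\eta$ is a $\mathbb{P}$-martingale (integration by parts, using the $\mathbb{P}$-covariation $d\langle N^{ij},\eta\rangle$ and the compensator identity above). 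Because the resulting intensity $(1+\kappa_t(i,j))R^1_t(i,j)H^i_t$ depends on the trajectory only through the current state, $X^2$ remains a time-inhomogeneous Markov jump process under $\mathbb{Q}$, and reading off the state-$i$ jump-to-$j$ rate gives precisely $R^2_t(i,j)=(1+\kappa_t(i,j))R^1_t(i,j)$. The diagonal then follows from $\sum_j R^2_u(i,j)=0$, completing the proposition.

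I expect the main obstacle to be the rigorous justification of the intensity transformation—namely that tilting by $\eta$ scales the $i\to j$ intensity by exactly $1+\kappa$, rather than any routine algebra. The cleanest route is the $\mathbb{P}$-martingale check for $\eta\cdot\bigl(N^{ij}-\text{new compensator}\bigr)$ via integration by parts, which requires the $\mathbb{P}$-covariation of $\eta$ with $N^{ij}$; alternatively one can invoke the standard change-of-measure theory for point-process intensities (e.g.\ Br\'emaud) and merely confirm that $\eta_t$ coincides with the canonical likelihood ratio between the two intensity specifications. The only other point needing care is the true-martingale (not merely local-martingale) property of $\eta$, which is immediate here from boundedness of $\kappa$ on the finite state space.
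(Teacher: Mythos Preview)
Your proposal is a correct and standard Girsanov-type argument for marked point processes: identify the $\mathbb{P}$-compensators of the transition-counting processes, write $\eta$ as a Dol\'eans--Dade exponential, verify the intensity transformation under the tilt, and read off the new rates. The diagonal relation is indeed just the rate-matrix constraint.

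However, there is nothing to compare against: the paper does not prove this proposition at all. It is stated as a direct citation of Eq.~3.4 in \citet{ding2021markov} and is used without proof as an input to the derivation of the KLUB formula (Theorem~A.3). So your write-up supplies strictly more than the paper does for this statement; the paper's ``proof'' is simply the citation.
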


From Proposition A.2, defining \(\kappa_t(i, j) = R^2_t(i, j) / R^1_t(i, j) - 1\) (as in Proposition A.1), we can compute the change of measure between the two CTMC-defined measures \(\mathbb{P}_{\mathrm{paths}}\) and \(\mathbb{Q}_{\mathrm{paths}}\) as \(\frac{d\mathbb{P}_{\mathrm{paths}}}{d\mathbb{Q}_{\mathrm{paths}}}\).

Now, we are ready to derive the KLUB for CTMCs:

\begin{theorem}\label{thm:klub-forward-formal}
{\normalfont{(KL-divergence Upper bound, KLUB)}} Consider the following two forward CTMCs: 
\[
    \begin{cases}
    \textrm{CTMC 1}: & q^1_{u+du|u}(y \mid x) = \delta_{xy} + R^1_t(x,y) du + o(du), \\ 
    \textrm{CTMC 2}: & q^2_{u+du|u}(y \mid x) = \delta_{xy} + R^2_t(x,y) du + o(du).
    \end{cases}
\]
Here, $R^1_t$ and $R^2_t$ represent the rate matrices of each CTMC over $[0, T]$, with a finite state space $S = \{x_1, \cdots, x_N\}$.
Let $\mathbb{P}_T$ and $\mathbb{Q}_T$ be the resulting probability distributions at the time $T$ of the outputs of CTMC 1 and 2, respectively. Then we have:
\[
D_\mathrm{KL}(\mathbb{P}_T \| \mathbb{Q}_T) \le \mathbb{E}_{\mathbb{P}_{\mathrm{paths}}} \left[
\sum_{\substack{H_{u}^{ij}=1 \\ 0 < u \le T}}  \log \frac{R^1_u(i,j)}{R^2_u(i,j)}.
\right]
\]
where, $\mathbb{P}_{\mathrm{paths}}$ refers to the distribution over path space $(X_t)_{t\in [0, T]} \in [0, T]\times S$ generated by running CTMC 1, and $H_{u}^{ij} = 1$ represent the transition of state from $x_i$ to $x_j$ at time $u$.
\end{theorem}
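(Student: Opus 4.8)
The plan is to lift the comparison of the time-$T$ marginals $\mathbb{P}_T,\mathbb{Q}_T$ to the full path measures $\mathbb{P}_{\mathrm{paths}},\mathbb{Q}_{\mathrm{paths}}$, to evaluate the path-space relative entropy via the change-of-measure formula of \citet{ding2021markov} recalled in Propositions~A.1--A.2, and then to discard a nonnegative remainder to recover the stated jump-only bound. For Step~1, the evaluation map $(X_u)_{u\in[0,T]}\mapsto X_T$ is measurable and pushes $\mathbb{P}_{\mathrm{paths}}$ forward to $\mathbb{P}_T$ and $\mathbb{Q}_{\mathrm{paths}}$ to $\mathbb{Q}_T$, so the data-processing inequality for KL divergence gives $\mathcal{D}_{\mathrm{KL}}(\mathbb{P}_T\|\mathbb{Q}_T)\le\mathcal{D}_{\mathrm{KL}}(\mathbb{P}_{\mathrm{paths}}\|\mathbb{Q}_{\mathrm{paths}})$; it then suffices to bound the path-space divergence.

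For Step~2, I apply Proposition~A.2 with $\kappa_u(i,j)=R^2_u(i,j)/R^1_u(i,j)-1$ for $i\neq j$ and $\kappa_u(i,i)=0$, which is exactly the family that carries the $R^1$-driven chain (law $\mathbb{P}_{\mathrm{paths}}$) to the $R^2$-driven chain (law $\mathbb{Q}_{\mathrm{paths}}$); Proposition~A.1 then gives $d\mathbb{Q}_{\mathrm{paths}}/d\mathbb{P}_{\mathrm{paths}}=\eta_T$, and hence
\[
\log\frac{d\mathbb{P}_{\mathrm{paths}}}{d\mathbb{Q}_{\mathrm{paths}}}
=\underbrace{\int_0^T\sum_{i\neq j}\bigl(R^2_u(i,j)-R^1_u(i,j)\bigr)H_u^i\,du}_{L_T}\;-\;\sum_{0<u\le T}\log\Bigl(1+\sum_{i,j}\kappa_u(i,j)H_u^{ij}\Bigr).
\]
At each jump time exactly one indicator $H_u^{ij}$ equals $1$ (a single transition $i\to j$), so the last sum collapses to $\sum_{\{H_u^{ij}=1,\,0<u\le T\}}\log\bigl(R^2_u(i,j)/R^1_u(i,j)\bigr)$, giving
\[
\log\frac{d\mathbb{P}_{\mathrm{paths}}}{d\mathbb{Q}_{\mathrm{paths}}}=L_T+\sum_{\substack{H_u^{ij}=1\\0<u\le T}}\log\frac{R^1_u(i,j)}{R^2_u(i,j)}.
\]

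For Step~3, I take $\mathbb{E}_{\mathbb{P}_{\mathrm{paths}}}$ of both sides, so that $\mathcal{D}_{\mathrm{KL}}(\mathbb{P}_{\mathrm{paths}}\|\mathbb{Q}_{\mathrm{paths}})=\mathbb{E}_{\mathbb{P}_{\mathrm{paths}}}[L_T]+\mathbb{E}_{\mathbb{P}_{\mathrm{paths}}}\bigl[\sum_{\{H_u^{ij}=1\}}\log(R^1_u(i,j)/R^2_u(i,j))\bigr]$; the second summand is exactly the right-hand side of the theorem, so I am left with showing $\mathbb{E}_{\mathbb{P}_{\mathrm{paths}}}[L_T]\le 0$. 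Writing $\lambda^\ell_u(i):=\sum_{j\neq i}R^\ell_u(i,j)$ for the exit rates, the pathwise identity $\sum_i H_u^i\,g(i)=g(X_u)$ turns $\mathbb{E}_{\mathbb{P}_{\mathrm{paths}}}[L_T]$ into $\mathbb{E}_{\mathbb{P}_{\mathrm{paths}}}\bigl[\int_0^T(\lambda^2_u(X_u)-\lambda^1_u(X_u))\,du\bigr]$; using that under $\mathbb{P}_{\mathrm{paths}}$ the counting measure of $i\to j$ transitions has compensator $R^1_u(i,j)H_u^i\,du$, this term combines with the jump term to reconstitute the familiar pointwise-nonnegative rate $R^2-R^1+R^1\log(R^1/R^2)$, confirming the bookkeeping, and $\mathbb{E}_{\mathbb{P}_{\mathrm{paths}}}[L_T]$ vanishes whenever the two rate matrices share the same total exit rate out of each state — the situation of \eref{equation:CDEbound}, where the ground-truth reverse chain and its $\tau$-leaping/parallel-sampling discretization are built from single-site rate matrices of the same form. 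Dropping this term yields the claim. The main obstacle is precisely this last step: in full generality $\lambda^1$ and $\lambda^2$ need not be comparable, so $\mathbb{E}_{\mathbb{P}_{\mathrm{paths}}}[L_T]$ is not automatically signed, and one must invoke the structure of the discretized sampler (or an explicit normalization of the rate matrices) to eliminate it; the remaining ingredients — the data-processing reduction and the manipulation of the Radon--Nikodym density via Propositions~A.1--A.2 — are routine.
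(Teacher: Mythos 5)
Your proposal follows the paper's own proof almost step for step: the data-processing reduction $\mathcal{D}_{\mathrm{KL}}(\mathbb{P}_T\|\mathbb{Q}_T)\le\mathcal{D}_{\mathrm{KL}}(\mathbb{P}_{\mathrm{paths}}\|\mathbb{Q}_{\mathrm{paths}})$, the choice $\kappa_u(i,j)=R^2_u(i,j)/R^1_u(i,j)-1$ in Propositions~A.1--A.2, and the resulting decomposition $\log\bigl(d\mathbb{P}_{\mathrm{paths}}/d\mathbb{Q}_{\mathrm{paths}}\bigr)=L_T+\sum_{H_u^{ij}=1}\log\bigl(R^1_u(i,j)/R^2_u(i,j)\bigr)$ are exactly steps (18a)--(18d) of the paper. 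The only place you part ways is in how the compensator term $L_T$ is disposed of, and that is the substantive point.

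The paper's step (18d)$\to$(18e) declares $\int_0^T\sum_{i,j=1}^N\bigl(R^2_u(i,j)-R^1_u(i,j)\bigr)H_u^i\,du$ to be identically zero ``by $\sum_j R^1_t(i,j)=0$'', i.e.\ it reads the double sum as running over \emph{all} pairs including $i=j$, so that each row difference telescopes to $0-0=0$ pointwise and no sign condition is needed. You instead take Proposition~A.1 literally ($\kappa_{ii}=0$, hence no diagonal contribution to $L_t$) and obtain $L_T=\int_0^T\bigl(\lambda^2_u(X_u)-\lambda^1_u(X_u)\bigr)\,du$ with $\lambda^\ell_u(i)=\sum_{j\ne i}R^\ell_u(i,j)$, which forces you to argue $\mathbb{E}_{\mathbb{P}_{\mathrm{paths}}}[L_T]\le 0$ and leads to your (accurate) remark that this is not automatic for arbitrary rate matrices. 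So the ``obstacle'' you flag is precisely the step the paper passes over: its equality (18e) holds only under the including-the-diagonal convention, or when the exit rates agree, $\lambda^1_u\equiv\lambda^2_u$; with the standard Girsanov bookkeeping the exact path divergence is $\mathbb{E}\bigl[\int_0^T\sum_{j\ne X_u}\bigl(R^2-R^1+R^1\log(R^1/R^2)\bigr)(X_u,j)\,du\bigr]$, whose extra piece is exactly the $L_T$ you isolate. In short, your argument is the same as the paper's but carried out more carefully; to arrive at the theorem as stated you must either adopt the paper's reading of the substituted sum (diagonal included, so $L_T\equiv 0$) or add the hypothesis on exit rates that you correctly identify as the missing ingredient in full generality.
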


\begin{proof}

This result is derived by combining Proposition A.1 and A.2 and rearranging the resulting expression. 

\begin{subequations}
\begin{align}
D_{\mathrm{KL}}\left(\mathbb{P}_{\mathrm{paths}} \| \mathbb{Q}_{\mathrm{paths}}\right) 
& =\mathbb{E}_{\mathbb{P}_{\mathrm{paths}}} \left[ \log \frac{d \mathbb{P}_{\mathrm{paths}}}{d \mathbb{Q}_{\mathrm{paths}}} \right] \\
& =\mathbb{E}_{\mathbb{P}_{\mathrm{paths}}} \left[ \log \eta_t^{-1} \right]\\
& =\mathbb{E}_{\mathbb{P}_{\mathrm{paths}}} \left[
\int_0^t \sum_{i,j=1}^N \kappa_u(i, j) R^1_u(i, j) H^i_u du
- \log \prod_{0 < u \le T} ( 1 + \sum_{i,j=1}^N \kappa_u(i, j)H_u^{ij} )
\right] \\
& =\mathbb{E}_{\mathbb{P}_{\mathrm{paths}}} \left[
\int_0^t \sum_{i,j=1}^N (R^2_u(i, j) - R^1_u(i, j)) H^i_u du
+ \sum_{\substack{H_{u}^{ij} = 1 \\ 0 < u \le T}} (\log R^1_u(i, j) - \log R^2_u(i, j))
\right] \\
& =\mathbb{E}_{\mathbb{P}_{\mathrm{paths}}} \left[
\sum_{\substack{H_{u}^{ij} = 1 \\ 0 < u \le T}} (\log R^1_u(i, j) - \log R^2_u(i, j))
\right].
\end{align}
\end{subequations}

Step (18c) follows from Proposition A.1, while step (18d) is derived by substituting \(\kappa_t(i, j) = \frac{R^2_t(i,j)}{R^1_t(i,j)} - 1\) as defined in Proposition A.2. Additionally, step (18e) utilizes the property of the rate matrix, where \(\sum_{j} R^1_t(i,j) = 0\) for any \(t\) and \(i\).


Finally, we denote the distributions generated by CTMC 1 and CTMC 2 at time \(T\) as $\mathbb{P}_T$ and $\mathbb{Q}_T$, respectively. Since $\mathbb{P}_T$ and $\mathbb{Q}_T$ are the marginal distributions of $\mathbb{P}_{\mathrm{paths}}$ and $\mathbb{Q}_{\mathrm{paths}}$ at time \(T\), by the data processing inequality, the KL-divergence $D_\mathrm{KL}(\mathbb{P}_T \| \mathbb{Q}_T)$ is upper-bounded by $D_\mathrm{KL}(\mathbb{P}_{\mathrm{paths}} \| \mathbb{Q}_{\mathrm{paths}})$, concluding the proof.
\end{proof}

It is important to note that the KLUB derived here for CTMCs does not fully capture the case of discrete diffusion models, where the reverse rate matrix depends on the state. Nonetheless, we believe the metric derived here provides a useful proxy for estimating the error introduced when using $\tau$-leaping to sample in discrete diffusion models. Finding KLUB for state-dependent CTMCs would be an interesting direction for future work.

\subsection{Proof of \eref{equation:CDEbound}}
\label{appendix:technique0}
Let \( X^i \) refers to the \( i \)-th dimension of \( X \). Define $\pp_{t_{i+1}|t_i}(\cdot|\mathbf{x}_{t_i})\triangleq P_{X_{t_{i+1}}|\mathbf{x}_{t_i}}$
and $\qq_{t_{i+1}|t_i}(\cdot|\mathbf{x}_{t_i})\triangleq P_{X_t^1|\mathbf{x}_s} \otimes\cdots\otimes P_{X_t^d|\mathbf{x}_s}$ We can now derive the following relationship:
\[
\mathcal{E}_{\mathrm{CDE}}(s \shortrightarrow t \,|\, \mathbf{x}_s)
\triangleq D_{KL}\left(
\pp_{t|s} \big\| \qq_{t|s}
\right)
\le D_{KL}\left(
\pp_{[s, t]|s} \big\| \qq_{[s, t]|s}
\right)
= \text{KLUB}_{\mathbf{x}_s}(\mathbb{P}_t \| \mathbb{Q}_t).
\]
where \(\text{KLUB}_{\mathbf{x}_s}\) represents  comparing two continuous-time Markov chains (CTMCs), both starting at the initial point \( \mathbf{x}_s \).

Now we are ready to prove the result:
\begin{align*}
\mathcal{E}_{\mathrm{CDE}}(s \shortrightarrow t) 
&\triangleq \mathbb{E}_{\mathbf{X}_s \sim \mathbb{P}_s} \left[ \mathcal{E}_{\mathrm{CDE}}(s \shortrightarrow t \,|\, \mathbf{X}_s) \right] \\
&\le 
\mathbb{E}_{\mathbf{X}_s \sim \mathbb{P}_s} \left[ 
\textrm{\normalfont{KLUB}}_{\mathbf{X}_s}(\mathbb{P}_t \| \mathbb{Q}_t^{s \shortto t})\right] \\
&=
\mathbb{E}_{\mathbf{X}_s \sim \mathbb{P}_s} \left[ 
\mathbb{E}_{\pp_{t|s}}\left[ 
\sum_{\substack{H_{u}^{ij}=1 \\ 0 < u \le T}}  \log \frac{R^1_u(i,j)}{R^2_u(i,j)}
\right]
\right] \\
&=
\mathbb{E}_{\mathbb{P}_{\mathrm{paths}}}\left[ 
\sum_{\substack{H_{u}^{ij}=1 \\ 0 < u \le T}}  \log \frac{R^1_u(i,j)}{R^2_u(i,j)}
\right] \\
&=
\textrm{\normalfont{KLUB}}_{\mathbb{P}_s}(\mathbb{P}_t \| \mathbb{Q}^{s \shortto t}_t).
\end{align*}

\subsection{Technique 1}
\label{appendix:technique1}

The approximation is made as follows:
\begin{subequations}
\begin{align}
\mathcal{D}_\mathrm{KL}(\mathbb{P}_{path} \| \mathbb{Q}_{path}^{T\shortto 0})
- \mathcal{D}_\mathrm{KL}(\mathbb{P}_{path} \| \mathbb{Q}_{path}^{T\shortto t \shortto 0})
&=
\mathbb{E}_{\mathbb{P}_{path}}\left[
\log \frac{\mathbb{Q}_{path}^{T\shortto t \shortto 0}}{\mathbb{Q}_{path}^{T \shortto 0}}
\right] \label{eq:approx-p_forward} \\
&\approx 
\mathbb{E}_{\mathbb{Q}_{path}^{forward}}\left[
\log \frac{\mathbb{Q}_{path}^{T\shortto t \shortto 0}}{\mathbb{Q}_{path}^{T \shortto 0}}
\right] \label{eq:approx-q_forward} \\
&\approx 
\mathbb{E}_{\mathbb{Q}_{path}^{T\shortto t \shortto 0}}\left[
\log \frac{\mathbb{Q}_{path}^{T\shortto t \shortto 0}}{\mathbb{Q}_{path}^{T \shortto 0}}
\right] \\
&=
\mathcal{D}_\mathrm{KL}(\mathbb{Q}_{path}^{T\shortto t \shortto 0} \| \mathbb{Q}_{path}^{T\shortto 0})
\end{align}
\end{subequations}

Equation \eqref{eq:approx-p_forward} assumes that $\mathbb{P}_{path}\approx \mathbb{Q}_{path}^{forward}$ where $\mathbb{Q}_{path}^{forward}$ refers to the distribution made by forward CTMC. In equation \eqref{eq:approx-q_forward}, we assume that $\mathbb{Q}_{path} \approx \mathbb{Q}_{path}^{T \to t \to 0}$. It is important to note that we can use \eqref{eq:approx-p_forward} as a formula for KLUB computation, as introduced in Algorithm~\ref{alg:klub}. However, the results of JYS sampling schedule optimization show little difference between the two.

Compared to coarser sampling, KLUB computation can be organized as follows:
\begin{subequations}
\begin{align}
\textrm{\normalfont{KLUB}}\big(\mathbb{Q}_0^{T\shortto t \shortto 0} \,\big\|\, \mathbb{Q}_0^{T \shortto 0}\big) 
&= \mathbb{E}_{\mathbb{Q}_{\mathrm{paths}}^{T\shortto t \shortto 0}} \left[
\sum_{i\ne j}\sum_{u=0}^{T} H_{u}^{ij} \log 
\frac{R^{T\shortto t \shortto 0}_u(i,j)}{R^{T \shortto 0}_u(i,j)}
\right] \\
&= \mathbb{E}_{\mathbb{Q}_{\mathrm{paths}}^{T\shortto t \shortto 0}} \left[
\sum_{i\ne j}\sum_{u=0}^{t} H_{u}^{ij} \log 
\frac{R_t(i,j)}{R_T(i,j)}
+ \sum_{i\ne j}\sum_{u=t}^{T} H_{u}^{ij} 
\cancel{\log \frac{R_T(i,j)}{R_T(i,j)}}
\right] 
\label{equation:KLUB-coarse_to_fine-b}
\\
&= \mathbb{E}_{\mathbb{Q}_{\mathrm{paths}}^{T\shortto t \shortto 0}} \left[
\sum_{i\ne j}\log \frac{R_t(i,j)}{R_T(i,j)} 
\sum_{u=0}^{t} H_{u}^{ij}.
\right]
\label{equation:KLUB-coarse_to_fine}
\end{align}
\end{subequations}

In \eref{equation:KLUB-coarse_to_fine-b}, we utilized the fact that under \(\tau\)-leaping, \(R^{T\shortto t \shortto 0}_u(i,j) = R_T(i,j)\) for \(u \in [t, T]\) and \(R^{T\shortto t \shortto 0}_u(i,j) = R_t(i,j)\) for \(u \in [0, t]\). In \eref{equation:KLUB-coarse_to_fine}, the rate matrices are constant over intervals, allowing us to pull \(\log \frac{R_t(i,j)}{R_T(i,j)}\) outside the summation.

\subsection{Technique 2}
\label{appendix:technique2}

Consider the meaning of \(\mathbb{E}\left[\sum_{u=0}^{t} H_{u}^{ij}\right]\); it calculates the average probability of a transition from \(i\) to \(j\) occurring between time \(0\) and \(t\). If we knew \(\partial_u p(x_u=j, x_{u-}=i)\), this could be found by \(\int_0^t \partial_u p(x_u=j, x_{u-}=i) \, du\). However, we do not have access to \(\partial_u p(x_u=j, x_{u-}=i)\).

Fortunately, we do know the conditional transition rate \(\partial_u p(x_u=j \mid x_{u-}=i) = R_t(i,j) \). Let's assume that there are maximally single transition of state in each dimension during the time interval, which is the assumption behind using $\tau$-leaping algorithm for DDMs \citep{campbell2022continuous}. Using this, we can rewrite \eref{equation:KLUB-coarse_to_fine}:

\begin{subequations}
\begin{align}
\mathbb{E}_{\mathbb{Q}_{\mathrm{paths}}^{T\shortto t \shortto 0}}
\left[
\sum_{i\ne j}
\log \frac{R_t(i,j)}{R_T(i,j)} 
\sum_{t < u \leq T} H_{u}^{ij}
\right] 
&= \mathbb{E}_{\mathbb{Q}_{\mathrm{paths}}^{T\shortto t}}
\left[
\mathbb{E}_{\mathbb{Q}_{\mathrm{paths}}^{t \shortto 0}}
\left[
\sum_{i\ne j}
\log \frac{R_t(i,j)}{R_T(i,j)} \sum_{t < u \leq T} H_{u}^{ij} 
\Bigg| X_t = i\right]
\right] \label{equation:KLUB-simple-prev} \\
&\approx \mathbb{E}_{\mathbb{Q}_{\mathrm{paths}}^{T\shortto t}}
\left[
\sum_{X_t\ne j}\log \frac{R_t(X_t,j)}{R_T(X_t,j)} \times R_t(X_t, j) \Delta t  
\right]
\label{equation:KLUB-simple}
\end{align}
\end{subequations}
Equation \eref{equation:KLUB-simple-prev} applies the Law of Total Expectation, and in \eref{equation:KLUB-simple}, we utilize the equation:
\[
\mathbb{E}_{\mathbb{Q}_{\mathrm{paths}}^{t\shortto 0}}\left[\sum_{u=0}^{t} H_{u}^{ij} \Bigg| X_t = i\right] \approx R_t(i,j) \Delta t,
\]
where \(\Delta t = t - 0 = t\). The approximation becomes exact when there are only one transition in each dimension during single interval, and this assumption is true for absorb transition matrix. 



\section{Algorithm}

In this section, we present the main algorithm for Jump your steps. 

\subsection{KLUB computation}
\label{appendix:algorithm1}
Please refer to Algorithm~\ref{alg:klub}.
In the case of $k$-Gillespie, we first sampled the corresponding $t$ using the corresponding $p(t|k)$, and then apply Algorithm~\ref{alg:klub}. Here, $p(t|k)$ determined by predefined noise schedule.

\begin{algorithm}[h]
\caption{Computation of $\text{KLUB}(\mathbb{Q}^{s \shortto t \shortto u} \| \mathbb{Q}^{s \shortto u})$}
\label{alg:klub}
\begin{algorithmic}[1] 
\Require 
    \Statex $\theta$: Diffusion model parameters
    \Statex $s, t, u$: Timesteps, with $s > t > u$
    \Statex $p_{data}$: Data distribution
    \Statex $N$: Number of Monte Carlo samples

\Ensure 
    \Statex $\text{KLUB}$: Computed KLUB value

\State Initialize $\text{KLUB}_u \gets 0$ and $\text{KLUB}_d \gets 0$

\For{iteration $= 1$ to $N$}
    \State Sample $X_0 \sim p_{data}$ \Comment{Sample from data distribution}
    \State Sample $X_s \sim q_{s|0}(X_s \mid X_0)$ \Comment{Forward process to $s$}
    \State Sample $X_t \sim p^\theta_{t|s}(X_t \mid X_s)$ \Comment{If we use \eref{eq:approx-p_forward}, \( X_t \sim q_{t|0}(X_t \mid X_0) \).}
    
    \State Set $\Delta t \gets s - t$
    
    \State Update $\text{KLUB}_u \gets \text{KLUB}_u + \sum_j \Delta t R^\theta_t(X_t, j) \log \frac{R^\theta_t(X_t, j)}{R^\theta_T(X_s, j)}$
    \State Increment $\text{KLUB}_d \gets \text{KLUB}_d + 1$
\EndFor 

\State Compute $\text{KLUB} \gets \text{KLUB}_u / \text{KLUB}_d$ \Comment{Final KLUB value}
\State \Return $\text{KLUB}$
\end{algorithmic}
\end{algorithm}

\subsection{Jump your steps}
\label{appendix:algorithm2}
Please refer to Algorithm~\ref{alg:jump-your-steps}.

\begin{algorithm}[h!]
\caption{Jump Your Steps}
\label{alg:jump-your-steps}
\begin{algorithmic}[1]
\Require 
    \Statex $2^K$: Number of function evaluations
    \Statex $T, 0$: Maximum and minimum timesteps
\Ensure 
    \Statex $K \ge 1$: Number of iterations
\State Initialize $\text{Timesteps} \gets (T, 0)$

\For{$k = 1$ to $K$}
    \State Initialize $\text{Timesteps}^* \gets ()$
    
    \For{each pair $(s, u)$ in $\text{Timesteps}[:-1]$ and $\text{Timesteps}[1:]$}
        \State Compute $t \gets \text{GoldenSection}\left(t, \text{KLUB}(\mathbb{Q}^{s \to t \to u} \| \mathbb{Q}^{s \to u})\right)$
        \State Update $\text{Timesteps}^* \gets \text{Timesteps}^* + (t)$
    \EndFor
    
    \State Initialize $\text{Timesteps}^{**} \gets ()$
    
    \For{each pair $(t_i, t_j)$ in $\text{Timesteps}[:-1]$ and $\text{Timesteps}^*[1:]$}
        \State Update $\text{Timesteps}^{**} \gets \text{Timesteps}^{**} + (t_i, t_j)$
    \EndFor
    
    \State Update $\text{Timesteps} \gets \text{Timesteps}^{**}$
\EndFor 

\State \Return $\text{Timesteps}$
\end{algorithmic}
\end{algorithm}

\section{Experiment Details}
\label{appendix:experiment-details}

\paragraph{Golden Section} The golden section search was stopped if the difference between the newly optimized \( t \) and the previous \( t \) was smaller than \( T/2048 \). The maximum number of iterations was set to 32, but usually, the iterations were completed within 8 steps.

\paragraph{CountDown} 
We use SEDD \citep{lou2023discrete} for loss function and the DiT \citep{peebles2023scalable} as a model architecture, the noise schedule followed the log-linear scheme proposed in the SEDD paper. KLUB computation was done with \( \text{num\_samples} = 2048 \), and one golden section search took approximately 4 seconds.

\paragraph{CIFAR10} 
The pretrained model provided by CTMC \citep{campbell2022continuous} was used. 
For CIFAR10, with \( \text{num\_samples} = 1024 \), one golden section search took about 30 seconds.

\paragraph{Monophonic Music} 
The pretrained model provided by CTMC \citep{campbell2022continuous} was used. KLUB computation was performed with 2048 samples, and one golden section search took about 20 seconds. 

\paragraph{Text} 
The pretrained model provided by SEDD \citep{lou2023discrete} was used.
With \( \text{num\_samples} = 256 \), one golden section search took 120 seconds.

\newpage

\section{Additional Results}

\subsection{Ablation study}
\label{appendix:ablation-study}

\begin{wrapfigure}{r}{5.3cm}
\vspace{-2em}
\includegraphics[width=5.3cm]{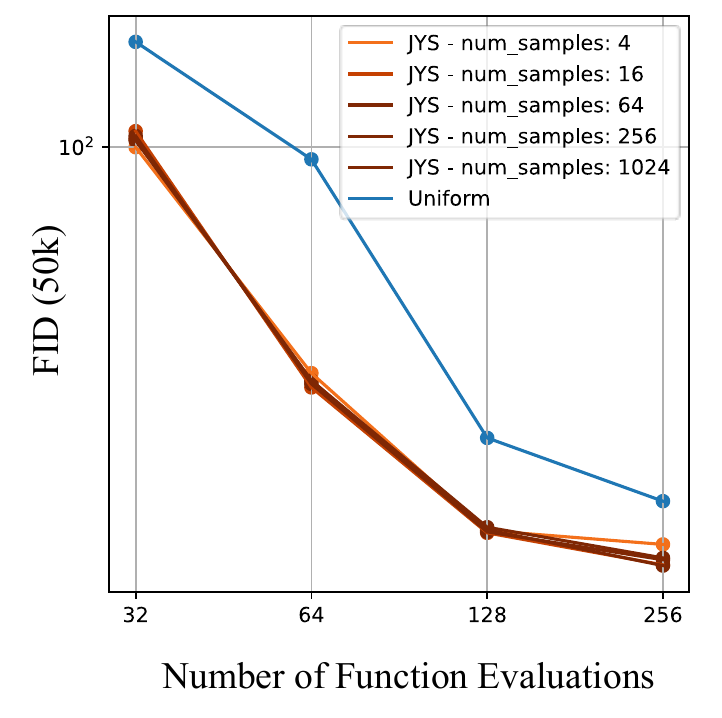}
\caption{
    Performance comparisons on Monophonic music. 
}
\label{figure:ablation-num_samples}
\end{wrapfigure}

\fref{figure:ablation-num_samples} presents the ablation study on the number of samples for KLUB computation. We compare a diverse set of sample sizes, ranging from 4 to 1024. We optimize the JYS sampling schedule for a CTMC trained on CIFAR-10, which has a Gaussian transition matrix. The results show that there was almost no difference. As shown in \fref{figure:klub-distribution}, this seems to be because the KLUB distribution of each sample shows a similar trend with respect to t, so optimization over t does not vary significantly depending on the number of samples. Based on this, it appears that the JYS sampling schedule optimization can be performed with much fewer resources than the computation used in this paper. For example, we can optimize JYS sampling schedule less than 1 minute if we only use 16 samples. 


\subsection{Qualitative results}
\label{appendix:additional-results}

In this subsection, we present qualitative comparison between Jump Your Steps and Uniform sampling schedule under various NFEs. 


\paragraph{CIFAR10}
\fref{fig:cifar10-qual} shows generated images with various NFE and sampling schedule. All results are generated using Euler $\tau$-leaping sampler.

\begin{figure}[!t]
    \centering
    \includegraphics[width=1\linewidth]{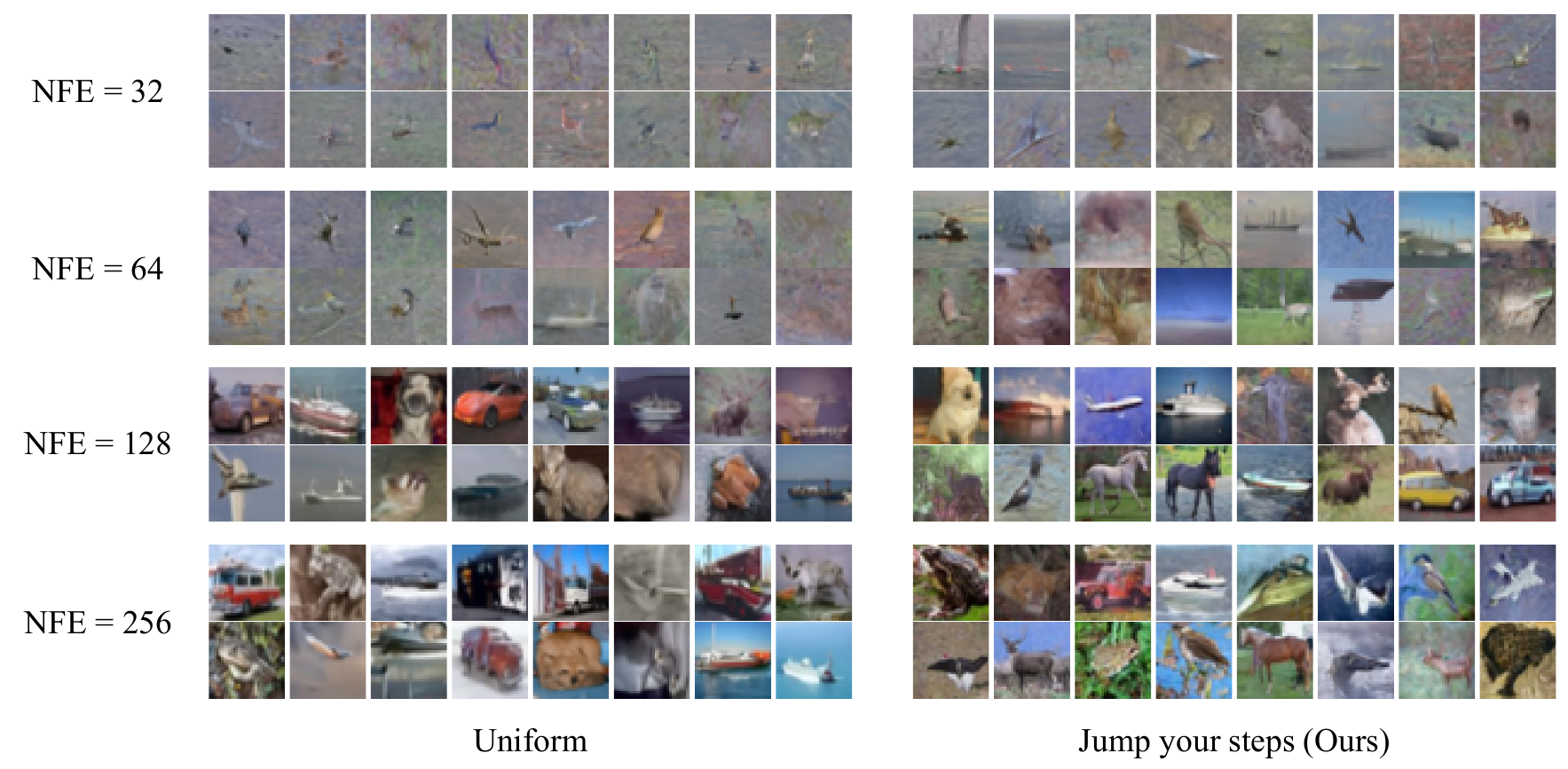}
    \caption{\textbf{CIFAR10 results.} 
    }
    \label{fig:cifar10-qual}
\end{figure}


\paragraph{Text}

Figure~\ref{fig:nfe_64_uniform}, \ref{fig:nfe_64_jys}, \ref{fig:nfe_256_uniform}, \ref{fig:nfe_256_jys} show generated text samples with various NFE and sampling schedule. All results are generated using Euler $\tau$-leaping sampler.

\begin{figure}[!t]
    \centering
    \includegraphics[width=1\linewidth]{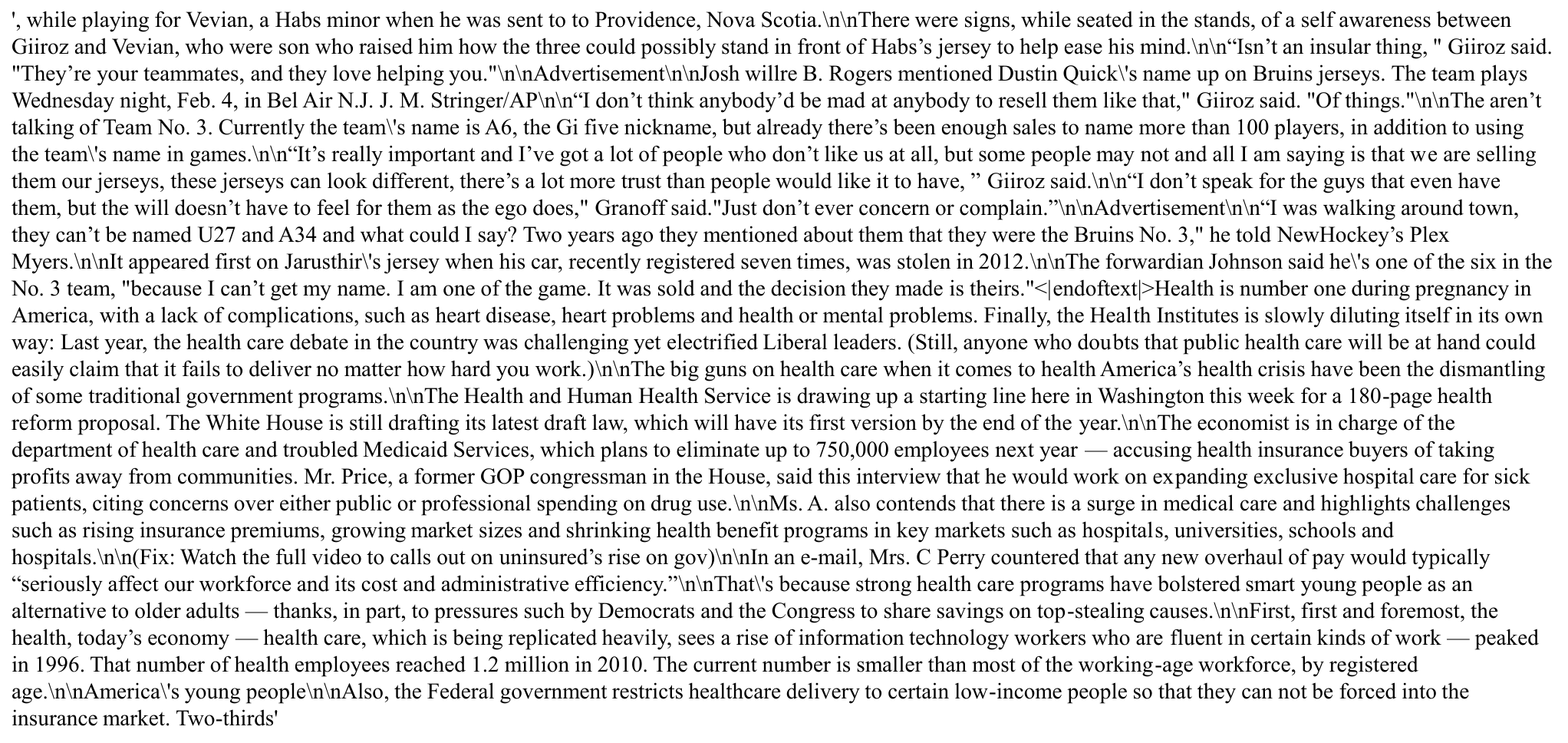}
    \caption{Generated samples using uniform schedule (NFE = 64).
    }
    \label{fig:nfe_64_uniform}
\end{figure}
\begin{figure}[!t]
    \centering
    \includegraphics[width=1\linewidth]{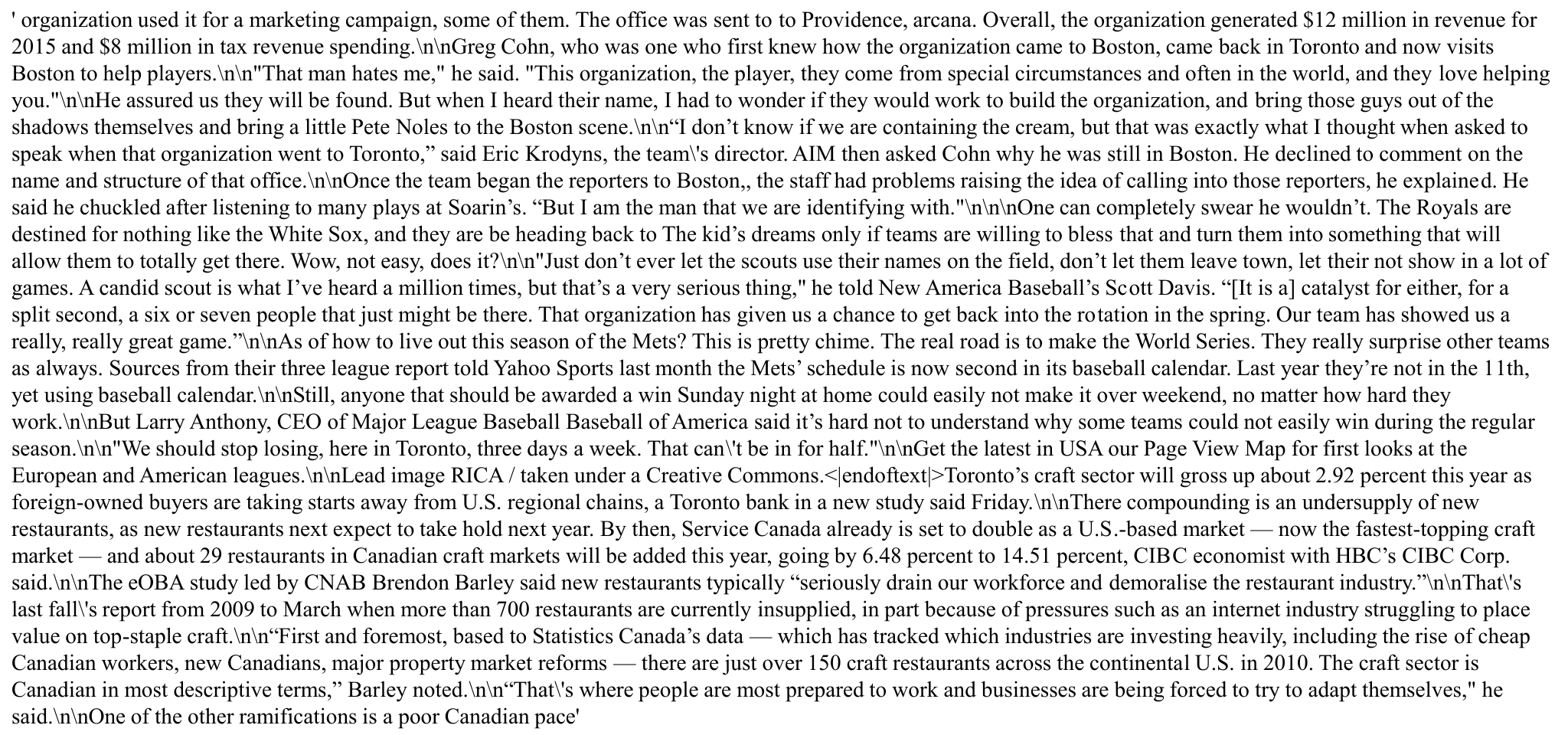}
    \caption{Generated samples using Jump Your Steps schedule (NFE = 64).
    }
    \label{fig:nfe_64_jys}
\end{figure}
\begin{figure}[!t]
    \centering
    \includegraphics[width=1\linewidth]{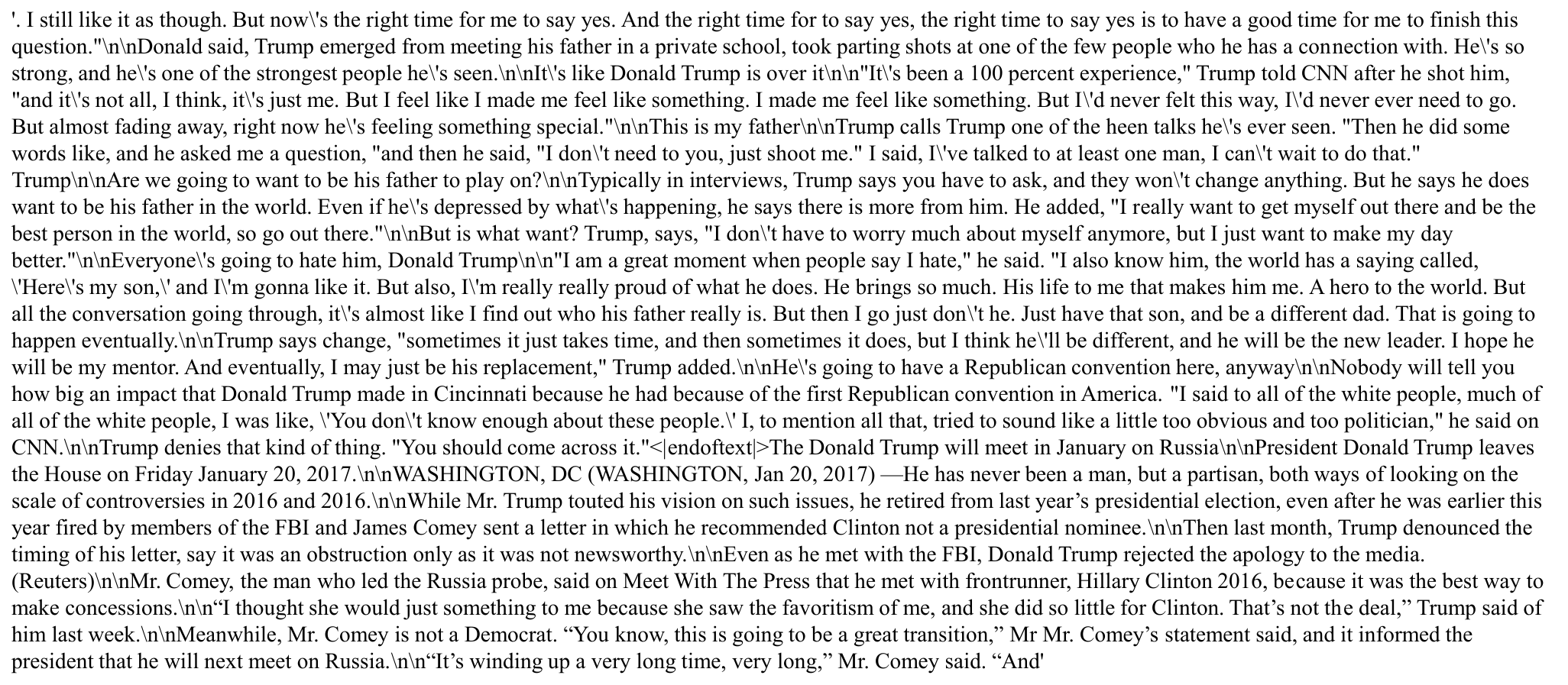}
    \caption{Generated samples using uniform schedule (NFE = 256).
    }
    \label{fig:nfe_256_uniform}
\end{figure}
\begin{figure}[!t]
    \centering
    \includegraphics[width=1\linewidth]{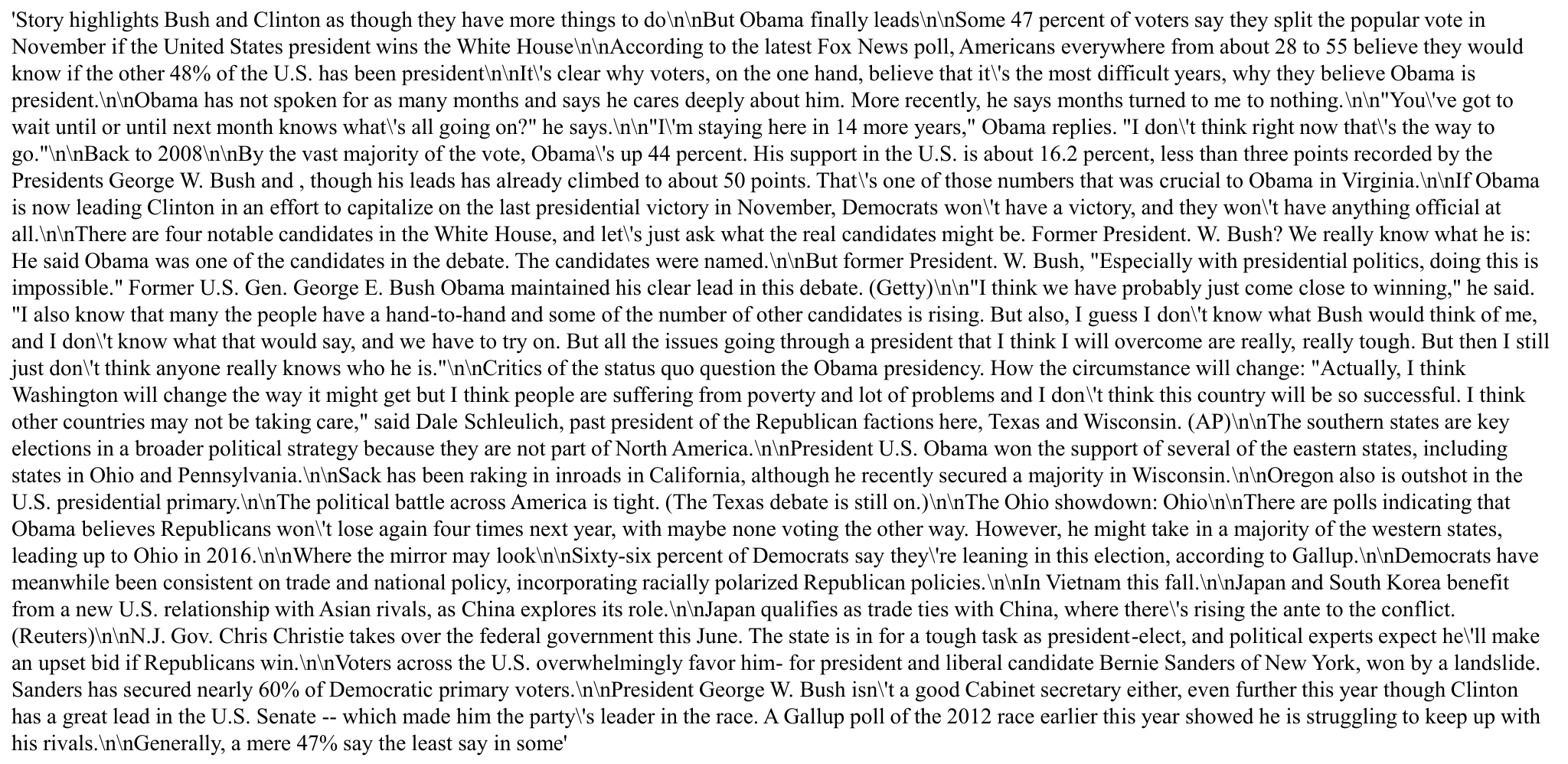}
    \caption{Generated samples using Jump Your Steps schedule (NFE = 256).
    }
    \label{fig:nfe_256_jys}
\end{figure}

\end{document}